\title[simplex symmetry in neural network classifiers]{On the emergence of simplex symmetry in the final and penultimate layers of neural network classifiers}
\theoremstyle{plain}
\newtheorem{theorem}{Theorem}[section]
\newtheorem*{theorem*}{Theorem}
\newtheorem*{"theorem"}{``Theorem''}
\newtheorem{corollary}[theorem]{Corollary}
\newtheorem{proposition}[theorem]{Proposition}
\newtheorem{lemma}[theorem]{Lemma}
\theoremstyle{definition}
\theoremstyle{remark}
\newtheorem{remark}[theorem]{Remark}
\numberwithin{equation}{section}
 \let\Ginclude@graphics\@org@Ginclude@graphics
\newcommand{\R}{\mathbb R} 
\renewcommand{\P}{{\mathbb P}}
\newcommand{\spt}{{\mathrm{spt}}}
\renewcommand{\H}{{\mathcal H}}
\newcommand{\F}{{\mathcal F}}
\newcommand{\X}{\mathcal{X}}
\newcommand{\Risk}{\mathcal{R}}
\newcommand{\LRa} {\Leftrightarrow}
\newcommand{\Ra} {\Rightarrow}
\renewcommand{\d}{\mathrm{d}}
\newcommand{\average}{{\mathchoice {\kern1ex\vcenter{\hrule height.4pt
width 6pt depth0pt} \kern-9.7pt} {\kern1ex\vcenter{\hrule
height.4pt width 4.3pt depth0pt} \kern-7pt} {} {} }}
\newcommand{\conv}{\mathrm{conv}}
\newcommand\showlabel{\addtocounter{equation}{1}\tag{\theequation}}
\DeclareMathOperator*{\argmin}{argmin} 
\DeclareMathOperator*{\argmax}{argmax}
\newcommand{\sw}[1]{\textcolor{black}{#1}}
\begin{document}

\author{Weinan E}
\address{Weinan E\\
Department of Mathematics and Program for Applied and Computational Mathematics\\
Princeton University\\
Princeton, NJ 08544
}
\email{weinan@math.princeton.edu}

\author{Stephan Wojtowytsch}
\address{Stephan Wojtowytsch\\
Program for Applied and Computational Mathematics\\
Princeton University\\
Princeton, NJ 08544
}
\email{stephanw@princeton.edu}

\date{\today}

\maketitle

\begin{abstract}%
A recent numerical study observed that neural network classifiers enjoy a large degree of symmetry in the penultimate layer. Namely, if $h(x) = Af(x) +b$ where $A$ is a linear map and $f$ is the output of the penultimate layer of the network (after activation), then all data points $x_{i, 1}, \dots, x_{i, N_i}$ in a class $C_i$ are mapped to a single point $y_i$ by $f$ and the points $y_i$ are located at the vertices of a regular $k-1$-dimensional \sw{standard simplex} in a high-dimensional Euclidean space. 

We explain this observation analytically in toy models for highly expressive deep neural networks. In complementary examples, we demonstrate rigorously that even the final output of the classifier $h$ is not uniform over data samples from a class $C_i$ if $h$ is a shallow network (or if the deeper layers do not bring the data samples into a convenient geometric configuration).
\end{abstract}

\subjclass[2020]{68T07, 
62H30
}
\keywords{Classification problem, deep learning, neural collapse, cross entropy, geometry within layers, simplex symmetry}


\section{Introduction}

A recent empirical study \cite{papyan2020prevalence} took a first step towards investigating the inner geometry of neural networks close to the output layer. In classification problems, the authors found that the data in the final and penultimate layers enjoy a high degree of symmetry. Namely, a neural network function $h_L:\R^d\to \R^k$ with $L$ layers can be understood as a composition
\begin{equation}
h_L(x) = A\,f_L(x) + b
\end{equation}
where $f_L:\R^d\to \R^m$ is (the composition of a componentwise nonlinearity with) a neural network with $L-1$ layers, $b\in \R^k$ and $A:\R^m\to \R^k$ is linear. In applications where $h_L$ was trained by stochastic gradient descent to minimize softmax-crossentropy loss to distinguish elements in various classes $C_1, \dots, C_k$, the authors observed that the following became approximately true in the long time limit.
\begin{itemize}
\item $f_L$ maps all elements in a class $C_i$ to a single point $y_i$.
\item The distance between the centers of mass of different classes in the penultimate layer $\|y_i- y_j\|$ does not depend on $i\neq j$.
\item Let $M= \frac1k \sum_{i=1}^k y_i$ be the center of mass of the data distribution in the penultimate center (normalizing the weight of data classes). Then the angle between $y_i - M$ and $y_j-M$ does not depend on $i\neq j$. 
\item The $i$-th row of $A$ is parallel to $y_i-M$.
\end{itemize}

In less precise terms, $h_L$ maps the classes $C_i$ to the vertices of a regular \sw{standard simplex} in a high-dimensional space. This phenomenon is referred to as `neural collapse' in \cite{papyan2020prevalence}. In this note, we consider the toy model where $f_L$ is merely a bounded measurable function and prove that under certain assumptions such simplex geometries are optimal. An investigation along the same lines has been launched separately in \cite{unconstrainedcollapse}.

Conversely, we show that even the output $h_L(C_i)$ of a shallow neural network $h_L$ over a data class $C_i$ does not approach a single value $z_i$ when the parameters of $h_L$ are trained by continuous time gradient descent. Since a deep neural network is the composition of a slightly less deep network and a shallow neural network containing the output layer, these results suggest that the $h_L$ cannot be expected to be uniform over a data class unless a convenient geometric configuration has already been reached two layers before the output.

We make the following observations.

\begin{enumerate}
\item Overparametrized networks can fit random labels at data points \cite{cooper2018loss} and can be efficiently optimized for this purpose in certain scaling regimes, see e.g.\ \cite{du2018bgradient,du2018gradient,weinan2019comparativepub}. The use of the class $\sw{L^\infty(\P; \R^m):= (L^\infty(\P))^m}$ as a proxy for very expressive deep neural networks thus can be justified heuristically from the static perspective of energy minimization (but not necessarily from the dynamical perspective of training algorithms). 

\sw{
In practice, the data distribution $\P$ is estimated on a finite set of sample points $\{x_1,\dots,x_N\}$ and an empirical distribution $\P_N = \frac1N \sum_{i=1}^N\delta_{x_i}$. A function $f_L\in L^\infty(\P; \R^m)$ determined by its values at the points $x_1,\dots, x_N$. A class of sufficiently complex neural networks which can fit any given set of outputs $\{y_1, \dots, y_N\}$ for inputs $\{x_1,\dots, x_N\}$ coincides with $L^p(\P; \R^m)$ for any $1\leq p\leq \infty$. The same is true for many other function models.
}

\sw{
If $\P_N = \frac1N \sum_{i=1}^N\delta_{x_i}$ or more generally, if all classes $C_1, \dots, C_k$ have a positive distance to each other, a function $f\in L^p(\P; \R^m)$ which is constant on every class can be extended to a $C^\infty$-function on $\R^d$. Thus in realistic settings, all functions below can be taken to be fairly regular.
}

\item As the softmax cross-entropy functional does not have minimizers in sufficiently expressive scaling-invariant function classes, we need to consider norm bounded classes. 

In the hypothesis class \sw{given by the ball of radius $R$ in $L^\infty(\P; \R^m)$}, the optimal map $h$ satisfies $h(x) = z_i$ for all $x$ in a data class $C_i$ and the values $z_i$ form the vertices of a regular \sw{simplex}. \sw{More precisely, the statement is valid under the constraint $\|h(x)\|_{\ell^p}\leq R$ for all $p\in (1,\infty)$}, but the precise location of the vertices depends on $p$. We refer to this as final layer geometry.

If $h:\R^d\to \R^k$ is given by $h(x) = A\,f(x)$ for $f\in L^\infty(\P;\R^m)$ and a linear map $A:\R^m\to \R^k$, the following holds: If $\|A\|_{L(\ell^2,\ell^2)}\leq 1$ and $\|f(x)\|_{\ell^2}\leq R$ for all $x\in \R^d$, then any energy minimizer satisfies $f(x) = y_i$ for all $x\in C_i$ where the outputs $y_i$ form the vertices of a regular \sw{standard simplex} in a high-dimensional ambient space.
We refer to this as penultimate layer geometry. We note that similar results were obtained in a different framework in \cite{jianfeng_collapse}.

\item Considerations on the final layer geometry are generally independent of the choice of norm on $\R^k$ within the class of $\ell^p$-norms, while the penultimate layer geometry appears to depend specifically on the use of the Euclidean norm. While the coordinate-wise application of a one-dimensional activation function is not hugely compatible with Euclidean geometry (or at least no more compatible than with  $\ell^p$-geometry for any $p\in [1,\infty]$), the transition from the penultimate layer to the final layer is described by a single affine map $y\mapsto Ay +b$. If $A$ and $b$ are initilized from a distribution compatible with Euclidean geometry (e.g.\ a rotation-invariant Gaussian) and optimized by an algorithm such as gradient descent which is based on the Euclidean inner product, then the use of Euclidean geometry for $(A,b)$ is well justified. 

In deeper layers, the significance of Euclidean geometry becomes more questionable. Even for the map $f:\R^d\to\R^m$, it is unclear whether the Euclidean norm captures the constraints on $f$ well. 

\item If $h(x) = \sum_{i=1}^m a_i\,\sigma(w_i^Tx+b_i)$ is a shallow neural network classifier and the weights $(a_i, w_i, b_i)$ are optimized by gradient descent, then in general $h$ does {\em not} converge to a classifier which is constant on different data classes (although the hypothesis class contains functions with arbitrarily low risk which are constant on the different classes $C_i$). This is established in different geometries:
\begin{enumerate}
\item In the first case, $\sigma$ is the ReLU activation function and the classes are linearly separable. Under certain conditions, gradient descent approaches a maximum margin classifier, which can be a linear function and thus generally non-constant over the data classes.
\item In the second case, $\sigma$ is constant for large arguments and there are three data points $x_1, x_2, x_3$ on a line where $x_1, x_3$ belong to the same class, but the middle point $x_2$ belongs to a different class. Then the values of $h$ at $x_1, x_2, x_3$ cannot be chosen independently due to the linear structure of the first layer, and the heuristic behind the toy model does not apply.
\end{enumerate}
\sw{
Note that $h$ is of the form $h=Af$, but $f(x) = \sigma(Wx)$ is not sufficiently expressive for the analysis of the {\em penultimate} layer to apply.
}
\end{enumerate} 

The theoretical analysis raises further questions. As the expressivity of the hypothesis class and the ability to set values on the training set with little interaction between different point evaluations seems crucial to the `neural collapse' phenomenon, we must question whether this simple geometric configuration is in fact desirable, or merely the optimal configuration in a hypothesis class which is too large to allow any statistical generalization bounds. Such concerns were already raised in \cite{elad2020another}. While the latter possibility is suggested by the theoretical analysis, it should be emphasized that in the numerical experiments in \cite{papyan2020prevalence} solutions with good generalization properties are found. This compatibility could be explained by considering a hypothesis class which is not as expressive as $L^\infty(\P;\R^m)$, but contains a function which attains a desired set of values on a realistic data set.

\sw{
It should be noted that the final layer results apply to any sufficiently expressive function class, not just neural networks. The results for the penultimate layer apply to classes of classifiers which are compositions of a linear function and a function in a very expressive function class. In both cases, we consider (norm-constrained) energy minimizers, not training dynamics. If the norm constraints are meaningful for a function model and an optimization algorithm can find the minimizers, the analysis applies in the long time limit, but the dynamics would certainly depend on the precise function model. This coincides with the situation considered by \cite{papyan2020prevalence}, in which the cross-entropy is close to zero after significant training.
}

\sw{
If $h=Af$ and $f$ is not sufficiently expressive (as in two-layer neural networks), we observe that classifier collapse does not occur, even in the final layer. Whether there are further causes driving classifier collapse in deep neural networks remains to be seen.
}

We believe that further investigation in this direction is needed to understand the following: Is neural collapse observed on random data sets or real data sets with randomly permuted labels? Does it occur also on test data or just training data? Is neural collapse observed for ReLU activation functions, or only for activation functions which tend to a limit at positive and negative infinity? Do the outputs over different classes $y_i$ attain a regular simplex configuration also if the weights of the different data classes are vastly different? Is neural collapse observed if a parameter optimization algorithm is used which does not respect Euclidean geometry (e.g.\ an algorithm with coordinatewise learning rates such as ADAM)? 
 The question when neural collapse occurs and whether it helps generalization in deep learning remains fairly open.

The article is structured as follows. In Section \ref{section preliminaries}, we rigorously introduce the problem we will be studying and obtain some first properties. In Sections \ref{section final layer} and \ref{section penultimate layer}, we study a toy model for the geometry of the {output layer} and penultimate layer of a neural network classifier respectively. In Section \ref{section caveats}, we present analytic examples in simple situations where neural network classifiers behave markedly differently and where the toy model analysis does not apply. 

\subsection{Notation}

We consider classifiers $h:\R^d\to\R^k$ in a hypothesis class $\H$. Often, $h$ will be assumed to be a general function on a finite set with norm-bounded output, or the composition of such a function $f:\R^d\to \R^m$ and a linear map $A:\R^m\to \R^k$ for some $m\geq 1$. Variables in $\R^d, \R^m$ and $\R^k$ are denoted by $x, y$ and $z$ respectively.

\section{Preliminaries}\label{section preliminaries}

\subsection{Set-up}

A classification problem is made up of the following ingredients:

\begin{enumerate}
\item A {\em data distribution}, i.e.\ a probability measure $\P$ on $\R^d$.
\item A {\em label function}, i.e.\ a $\P$-measurable function $\xi:\R^d\to \{e_1, \dots, e_k\} \subset \R^k$. We refer to the sets $C_i = \xi^{-1}(\{e_i\})$ as the classes.
\item A {\em hypothesis class}, i.e.\ a class $\H$ of functions $h:\R^d \to \R^k$ for $d\gg 1$ and $k\geq 2$. 
\item A {\em loss function} $\ell:\R^k\times \R^k\to [0,\infty)$.
\end{enumerate}

\sw{We always assume that $\H\subseteq L^1(\P;\R^k)$ and often even $\H\subseteq L^\infty(\P;\R^k)$.} These ingredients are combined in the {\em risk functional}
\begin{equation}
\Risk:\H\to [0, \infty), \qquad \Risk(h) = \int_{\R^d} \ell\big(h(x), \xi_x\big)\,\P(\d x),
\end{equation}
which is approximated by the {\em empirical risk functional}
\[
\widehat \Risk_n(h) = \frac1n \sum_{i=1}^n \ell\big(h(x_i), \xi_i\big)
\]
where $x_i$ are samples drawn from the distribution $\P$ and $\xi_i = \xi_{x_i}$. Since we can write
\[
\widehat \Risk_n(h) = \int_{\R^d} \ell\big(h(x), \xi_x\big)\,\P_n(\d x), \qquad \P_n = \frac1n \sum_{i=1}^n \delta_{x_i},
\]
we do not differentiate between empirical risk and (population) risk in this article. This allows us to organically incorporate that all results are independent of the number of data points. We focus on the {\em softmax cross entropy risk functional} associated to the loss function
\begin{equation}
\ell\big(h, y\big) = - \log \left(\frac{\exp (h\cdot y)}{\sum_{i=1}^k \exp(h\cdot e_k)}\right).
\end{equation}
This loss function allows the following probabilistic interpretation: For given a given classifier $h\in \H$ and data point $x\in \R^d$, the vector $\pi$ with entries
\[
\pi_i(x):= \frac{\exp (h(x)\cdot e_i)}{\sum_{j=1}^k \exp(h(x)\cdot e_j)}
\]
is a counting density on the set of labels $\{1,\dots, k\}$, depending on the input $x$. The function
\[
\Phi:\R^k\to \R^k, \qquad \Phi(h) = \left(\frac{\exp (h\cdot e_1)}{\sum_{i=1}^k \exp(h\cdot e_i)}, \dots, \frac{\exp (h\cdot e_k)}{\sum_{i=1}^k \exp(h\cdot e_i)}\right)
\]
which converts a $k$-dimensional vector into a counting density is referred to as the softmax function since it approximates the maximum coordinate function of $h$ for large inputs. The cross-entropy (Kullback-Leibler divergence) of this distribution with respect to the distribution $\bar \pi(x)$ which gives the correct label with probability $1$ is precisely 
\[
-\sum_{j=1}^k \log \left(\frac{\pi_j(x)}{\bar\pi_j(x)} \right)\,\bar \pi(x) = -\log\left(\frac{\pi_{i(x)}}{1}\right)\cdot 1 = -\log\left(\frac{\exp (h(x)\cdot \xi_x)}{\sum_{i=1}^k \exp(h(x)\cdot e_k)}\right)
\]
since $\bar\pi_{j} = \delta_{j,i(x)}$ and $0\cdot \log(\infty) = 0$ in this case by approximation. The risk functional thus is the average integral of the pointwise cross-entropy of the softmax counting densities with respect to the true underlying distribution. 

Note the following: $\ell>0$, but if $h$ is such that \sw{$h(x)\cdot \xi_x > \max_{e_1, \dots, e_k \neq \xi_x} h(x) \cdot e_i$ for $\P$-almost every $x$}, then 
\[
\lim_{\lambda\to \infty} \Risk(\lambda h) = \lim_{\lambda\to\infty} -\int_{\R^d} \log \left(\frac{\exp (\lambda h(x)\cdot \xi_x)}{\sum_{i=1}^k \exp(\lambda h(x)\cdot e_k)}\right)\,\P(\d x) = 0.
\]
Thus the cross-entropy functional does not have minimizers in suitably expressive function classes which are cones (i.e.\ $f\in \H, \lambda>0 \Ra \lambda f\in \H$). So to obtain meaningful results by energy minimization, we must consider

\begin{enumerate}
\item a dynamical argument concerning a specific optimization algorithm, or
\item a restricted hypothesis class with meaningful norm bounds, or
\item a higher order expansion of the risk.
\end{enumerate}

We follow the first line of inquiry for shallow neural networks in Section \ref{section caveats} and the second line of inquiry for toy models for deep networks in Sections \ref{section final layer} and \ref{section penultimate layer}.

\subsection{Convexity of the loss function} For the following, we note that the softmax cross entropy loss function has the following convexity property.

\begin{lemma}\label{lemma convex loss function}
The function 
\[
\Phi_j:\R^k\to \R, \qquad \Phi(z) = - \log\left( \frac{\exp(z_j)}{\sum_{i=1}^k\exp(z_i)}\right) = \log\left(\sum_{i=1}^k \exp(z_i)\right)-z_j
\]
is convex for any $1\leq j\leq k$ and strictly convex on hyperplanes $H_\alpha$ of the form
\[
H_\alpha = \left\{z\in \R^k : \sum_{j=1}^k z_j = \alpha\right\}.
\]
\end{lemma}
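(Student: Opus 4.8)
The plan is to write $\Phi_j = L - \lambda_j$, where $L(z) = \log\sum_{i=1}^k \exp(z_i)$ is the log-sum-exp function and $\lambda_j(z) = z_j$ is linear. Since $\lambda_j$ is affine (hence convex, with vanishing Hessian), both assertions reduce to the corresponding statements for $L$: that $L$ is convex on $\R^k$ and strictly convex on each hyperplane $H_\alpha$.

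For $L$ I would compute derivatives directly. Writing $p_i(z) = \exp(z_i)/\sum_{l=1}^k \exp(z_l)$ for the softmax weights, one finds $\partial_i L = p_i$ and $\partial_i\partial_l L = p_i\,\delta_{il} - p_i p_l$, so that $\n^2 L(z) = \operatorname{diag}(p(z)) - p(z)\,p(z)^T$. For any $v\in\R^k$ this gives $\langle \n^2 L(z)\,v, v\rangle = \sum_{i=1}^k p_i v_i^2 - \big(\sum_{i=1}^k p_i v_i\big)^2$, which is exactly the variance of the random variable taking the value $v_i$ with probability $p_i$; hence it is $\geq 0$ (this already yields convexity of $L$ on $\R^k$, and thus of $\Phi_j$), and it equals $0$ if and only if $v$ is constant on the support of $p(z)$. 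Since $p_i(z)>0$ for all $i$ and all $z$, this forces $v\in\operatorname{span}\{(1,\dots,1)\}$, so that $\ker \n^2 L(z) = \R\,(1,\dots,1)$ for every $z\in\R^k$.

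For strict convexity on $H_\alpha$, observe that $H_\alpha$ is an affine subspace with tangent space $V = \{v\in\R^k : \sum_{i=1}^k v_i = 0\}$, and $V \cap \R\,(1,\dots,1) = \{0\}$. Hence for every $z\in H_\alpha$ the restriction of $\n^2 L(z)$ to $V$ is positive definite. Given distinct $z^0, z^1\in H_\alpha$, the segment $z^t = (1-t)z^0 + t z^1$ lies in $H_\alpha$ with $\dot z^t = z^1 - z^0 \in V\setminus\{0\}$, so $g(t) := L(z^t) = \Phi_j(z^t) + \lambda_j(z^t)$ has $g''(t) = \langle \n^2 L(z^t)(z^1-z^0),\, z^1-z^0\rangle > 0$ on $[0,1]$; since $\lambda_j$ is affine along the segment, $t\mapsto \Phi_j(z^t)$ is then strictly convex, which is the claim.

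\textbf{Main obstacle.} There is no genuine difficulty here; the only points requiring a little care are the precise identification of $\ker \n^2 L(z)$ (which uses $p_i(z)>0$) and the elementary passage from ``positive definiteness of the Hessian along all directions tangent to $H_\alpha$'' to ``strict convexity of the restriction to $H_\alpha$'', i.e.\ the one-variable second-derivative test applied along segments inside $H_\alpha$. One could alternatively deduce convexity of $L$ from Hölder's inequality (log-convexity of $z\mapsto\sum_i\exp(z_i)$) and track the equality case, but the Hessian computation is the most direct route to the strictness statement.
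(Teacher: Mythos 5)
Your proof is correct and follows essentially the same route as the paper's: compute the Hessian of the log-sum-exp term, recognize the quadratic form $\sum_i p_i v_i^2 - (\sum_i p_i v_i)^2$ as a variance that vanishes only for $v$ parallel to $(1,\dots,1)$, and conclude strict convexity on $H_\alpha$ since that direction is transverse to the hyperplane. You spell out the final segment-restriction argument a bit more explicitly than the paper does, but the substance is identical.
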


For the sake of completeness, we provide a proof in the Appendix.
Since $\Phi\big(z+ \lambda (1,\dots,1)\big) = \Phi(z)$ for all $\lambda\in \R$, we note that $\Phi_j$ is {\em not} strictly convex on the whole space $\R^d$.

\section{Heuristic geometry: final layer}\label{section final layer}

\subsection{Collapse to a point}

In this section, we argue that the output $h(C_i)$ of the classifier should be a single point for all classes $C_i$, $i=1,\dots,k$ if the hypothesis class is sufficiently expressive. We will discuss the penultimate layer below.

\begin{lemma}\label{lemma tetrahedron}
Let $h\in \H$ and set
\[
z_i:= \frac1{|C_i|}\int_{C_i} h(x')\,\P(\d x'), \qquad \bar h(x) = z_i \quad\text{for all }x\in C_i.
\]
Then $\Risk(\bar h) \leq \Risk(h)$ \sw{and equality holds if and only if there exists a function $\lambda\in L^1(\P)$ such that $h-\bar h = \lambda (1,\dots,1)$ $\P$-almost everywhere}. 
\end{lemma}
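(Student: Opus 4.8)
The natural strategy is to apply the convexity result (Lemma~\ref{lemma convex loss function}) classwise using Jensen's inequality. Fix a class $C_i$ with label $\xi_x = e_i$ for $x \in C_i$. On $C_i$ the loss is $\ell(h(x),e_i) = \Phi_i(h(x))$ in the notation of Lemma~\ref{lemma convex loss function}, which is convex on $\R^k$. Since $z_i$ is by definition the (normalized) barycenter of $h$ over $C_i$ with respect to the conditional probability $\frac{1}{|C_i|}\P\restriction_{C_i}$, Jensen's inequality gives
\[
\Phi_i(z_i) = \Phi_i\!\left(\frac{1}{|C_i|}\int_{C_i} h(x')\,\P(\d x')\right) \le \frac{1}{|C_i|}\int_{C_i}\Phi_i(h(x'))\,\P(\d x').
\]
Multiplying by $|C_i| = \P(C_i)$ and summing over $i = 1,\dots,k$ yields $\Risk(\bar h) \le \Risk(h)$. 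One should first record that all integrals are finite: since $\H \subseteq L^1(\P;\R^k)$ the barycenters $z_i$ are well defined, and $\Phi_i$ grows at most linearly (indeed $0 \le \Phi_i(z) \le \log k + \max_j z_j - z_j \le \log k + 2\|z\|_\infty$ type bounds), so $\Risk(h) < \infty$ whenever $h \in L^1$; the application of Jensen is then legitimate.

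For the equality case, the point is to characterize when Jensen's inequality is tight for a convex but not strictly convex function. The clean way is to use the strict convexity of $\Phi_i$ on the hyperplanes $H_\alpha$ together with the translation invariance $\Phi_i(z + \lambda(1,\dots,1)) = \Phi_i(z)$. Decompose $\R^k = H_0 \oplus \R\,(1,\dots,1)$ and write $h(x) = h^\perp(x) + \mu(x)(1,\dots,1)$ where $h^\perp(x) \in H_0$; then $\Phi_i(h(x)) = \Phi_i(h^\perp(x))$ depends only on the $H_0$-component, and $\Phi_i$ restricted to $H_0$ is strictly convex. Equality in Jensen's inequality for a strictly convex function forces $h^\perp$ to be $\P$-a.e.\ constant on each $C_i$, equal to its barycenter $z_i^\perp$. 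One then checks that $\bar h$ as defined also has $H_0$-component equal to $z_i^\perp$ on $C_i$ (the projection onto $H_0$ commutes with taking barycenters), so $h - \bar h$ has zero $H_0$-component $\P$-a.e., i.e.\ $h - \bar h = \lambda\,(1,\dots,1)$ $\P$-a.e.\ for $\lambda := \mu - \bar\mu \in L^1(\P)$. Conversely, if $h - \bar h = \lambda(1,\dots,1)$ then $\ell(h(x),\xi_x) = \ell(\bar h(x),\xi_x)$ pointwise by translation invariance, so the risks agree.

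The main technical obstacle is the equality case, specifically making the "strict convexity on $H_0$ $\Rightarrow$ a.e.\ constancy" step rigorous for a general probability measure rather than a finitely supported one. The cleanest route is: if $\Phi_i$ is strictly convex on $H_0$ and $g \in L^1(\P\restriction_{C_i}; H_0)$ satisfies $\Phi_i(\E g) = \E\,\Phi_i(g)$, then $g = \E g$ a.e. This follows from a standard fact — a strictly convex function lies strictly above any supporting hyperplane away from the contact point, so writing $\Phi_i(g) \ge \Phi_i(\E g) + \langle p, g - \E g\rangle$ with equality in expectation forces $\Phi_i(g) = \Phi_i(\E g) + \langle p, g-\E g\rangle$ a.e., and strict convexity then forces $g = \E g$ a.e. This should be invoked as a known lemma rather than reproven; the only genuine work is the bookkeeping of the orthogonal (or rather, direct-sum) decomposition $\R^k = H_0 \oplus \R(1,\dots,1)$ and verifying that barycenters behave well under the projection, which is routine linearity.
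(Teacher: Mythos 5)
Your proposal is correct and follows essentially the same route as the paper: Jensen's inequality applied classwise via the convexity of $\Phi_i$ from Lemma~\ref{lemma convex loss function}, with the equality case handled by decomposing $\R^k$ into the hyperplane $\sum_j z_j = 0$ and the direction $(1,\dots,1)$ (the paper's projection $P^\sharp$ is exactly your $H_0$-component) and invoking strict convexity there. Your additional remarks on integrability and on the supporting-hyperplane argument for the equality case of Jensen only make explicit steps the paper leaves implicit.
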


The reasoning behind the Lemma is that
\begin{align*}
\int_{C_i} \Phi_i(h(x))\,\P(\d x) &\approx \int_{C_i} \Phi_i\left(z_i \right) + \nabla \Phi_i(z_i) \cdot \big(h(x) - z_i\big) + \frac12 \big(h(x) - z_i\big)^T D^2\Phi_i(z_i)\,\big(h(x) - z_i\big) \,\P(\d x)\\
	&= \int_{C_i} \Phi_i\left(z_i \right) \,\P(\d x) + \nabla \Phi_i(z_i) \cdot \int_{C_i} h(x) - z_i\,\P(\d x)\\
	&\qquad + \frac12 \int_{C_i} \big(h(x) - z_i\big)^T D^2\Phi_i(z_i)\,\big(h(x) - z_i\big) \,\P(\d x)\\
	&\geq \int_{C_i} \Phi_i\left(z_i \right) \,\P(\d x)
\end{align*}
since the first order term vanishes. A summation over $i$ establishes the result. A rigorous proof using Jensen's inequality can be found in the appendix.

Thus if a class $C_j$ is mapped to a set $h(C_j)\subseteq \R^k$ with a prescribed center of mass, different classes are mapped to the same centers of mass, it is energetically favorable to reduce the variance to the point that $h(C_j)$ is a single point. Whether or not this is attainable depends primarily on the hypothesis class $\H$, but a very expressive class like deep neural networks is likely to allow this collapse to a single point. 

\begin{corollary}
If \sw{$\H = L^\infty(\P;V)$ is the class of bounded $\P$-measurable functions which take values in a compact convex} set $V\subset\R^k$, then \sw{a} minimizer $h$ or $\Risk$ in $\H$ \sw{can be taken to map} the class $C_i$ to a single point $z_i\in V$ for all $i=1,\dots, k$, \sw{and all other minimizer differ from $h$ only in direction $(1,\dots,1)$.}
\end{corollary}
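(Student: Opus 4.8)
The plan is to sidestep any abstract compactness argument and instead reduce the problem to a finite-dimensional one by means of Lemma~\ref{lemma tetrahedron}, which holds for an arbitrary hypothesis class, and then to read off the uniqueness statement from the strict convexity of $\Phi_j$ on the hyperplanes $H_\alpha$ provided by Lemma~\ref{lemma convex loss function}.

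First I would observe that on the (convex) subset of $\H$ consisting of functions that are constant on each class the risk is a finite-dimensional function: if $g \equiv w_i \in V$ on $C_i$, then $\Risk(g) = F(w_1,\dots,w_k)$ with $F(w_1,\dots,w_k) := \sum_{i=1}^k \P(C_i)\,\Phi_i(w_i)$, where $\Phi_i$ is the convex function from Lemma~\ref{lemma convex loss function}. Since $F$ is continuous and $V^k$ is compact, $F$ attains its minimum at some $(z_1,\dots,z_k)\in V^k$; let $h\in\H$ be defined by $h\equiv z_i$ on $C_i$. To see that $h$ minimizes $\Risk$ over \emph{all} of $\H$, take any $g\in\H$, set $z_i^g := \frac1{\P(C_i)}\int_{C_i} g\,\d\P$ (a point of $V$, because $V$ is compact and convex) and let $\bar g\equiv z_i^g$ on $C_i$; then Lemma~\ref{lemma tetrahedron} gives $\Risk(g)\ge\Risk(\bar g) = F(z_1^g,\dots,z_k^g)\ge F(z_1,\dots,z_k) = \Risk(h)$. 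This establishes existence of a classwise constant minimizer. (Classes with $\P(C_i)=0$ play no role and can be assigned an arbitrary value in $V$.)

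For the uniqueness clause I would take an arbitrary minimizer $g$ and form $\bar g$ as above. Since $\Risk(\bar g)\le\Risk(g)=\min_{\H}\Risk\le\Risk(\bar g)$ forces equality in Lemma~\ref{lemma tetrahedron}, there is $\lambda\in L^1(\P)$ with $g-\bar g = \lambda\,(1,\dots,1)$ $\P$-a.e. It then remains to compare the classwise constant minimizers $\bar g$ and $h$. Both $(z_i^g)$ and $(z_i)$ minimize $F$ over $V^k$, and as $F$ decouples across classes, for every $i$ with $\P(C_i)>0$ the vectors $z_i^g$ and $z_i$ both minimize $\Phi_i$ over $V$. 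The key claim is that any two minimizers $a,b\in V$ of $\Phi_i$ differ by a multiple of $(1,\dots,1)$: the midpoint $\tfrac12(a+b)\in V$ is again a minimizer, so $\Phi_i\!\big(\tfrac12(a+b)\big) = \tfrac12\big(\Phi_i(a)+\Phi_i(b)\big)$; replacing $b$ by $b' := b - \tfrac1k\big(\textstyle\sum_l b_l-\sum_l a_l\big)(1,\dots,1)$, which lies in the same hyperplane $H_\alpha$ as $a$ and has $\Phi_i(b')=\Phi_i(b)$ by the translation invariance of $\Phi_i$, the same equality persists for $a,b'$, and the strict convexity of $\Phi_i$ on $H_\alpha$ (Lemma~\ref{lemma convex loss function}) forces $a=b'$. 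Hence $z_i^g-z_i\in\mathrm{span}\{(1,\dots,1)\}$ for every relevant $i$, so $\bar g - h$ is a classwise constant multiple of $(1,\dots,1)$, and therefore $g-h = (g-\bar g)+(\bar g-h) = \mu\,(1,\dots,1)$ for some $\mu\in L^1(\P)$, which is the assertion.

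I expect the last comparison step to be the only genuine obstacle: two pointwise minimizers of $\Phi_i$ over $V$ need not lie on a common level hyperplane, so one must first use the invariance $\Phi_i(z+t(1,\dots,1))=\Phi_i(z)$ to translate them onto a single $H_\alpha$ before the strict-convexity assertion of Lemma~\ref{lemma convex loss function} applies; everything else is bookkeeping around Lemmas~\ref{lemma tetrahedron} and~\ref{lemma convex loss function} together with the elementary fact that barycenters of $V$-valued maps remain in the compact convex set $V$.
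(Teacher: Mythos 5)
Your proof is correct and follows essentially the same route as the paper, which presents this corollary as a direct consequence of Lemma \ref{lemma tetrahedron}: averaging over each class stays in $\H$ because $V$ is compact and convex, existence follows from finite-dimensional compactness of $V^k$, and the equality case of the lemma yields the $(1,\dots,1)$ rigidity. Your additional step of translating two minimizers of $\Phi_i$ over $V$ onto a common hyperplane $H_\alpha$ before invoking the strict convexity from Lemma \ref{lemma convex loss function} is a correct and genuinely needed detail for the ``all other minimizers'' clause that the paper leaves implicit.
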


\subsection{\sw{Simplex} configuration}

In this section, we discuss the emergence of the simplex configuration under the assumption that the every class gets mapped to a single point $z_i\in \R^k$, or equivalently that each class consists of a single data point. Again, we consider the {\em last layer problem}: Assume that 
\begin{itemize}
\item $\mathcal X=\{x_1,\dots, x_d\}$,
\item $\H$ is the class of functions from $\mathcal X$ to the Euclidean ball $B_R(0)$ in $\R^k$. 
\end{itemize}
 Let $\P$ be a probability measure on $\mathcal X$ and $p_i:= \P(\{x_i\})$. We wish to solve the minimization problem $h^*\in \argmin_{h\in \H} \Risk(h)$ where
\begin{align*}
\Risk(h) = \int_\X -\log\left(\frac{\exp(h(x)\cdot \xi_x)}{\sum_{i=1}^d \exp(h(x)\cdot e_i)}\right)\,\P(\d x) = -\sum_{i=1}^d p_i \log\left(\frac{\exp(h(x_i)\cdot e_i)}{\sum_{j=1}^d\exp(h(x_i)\cdot e_j)}\right).
\end{align*}
Due to our choice of hypothesis class, there is no interaction between $h(x_i)$ and $h(x_j)$, so we can minimize the sum term by term:
\[
z_i:= h(x_i) \in \argmin_{z\in B_R(b)} \left(- \log\left(\frac{\exp(z\cdot e_i)}{\sum_{j=1}^d \exp(z\cdot e_j)}\right)\right) = \min_{z\in B_R(b)}\Phi_i(z)
\]
where $\Phi_i(z) = \log\left(\sum_{j=1}^k \exp(z_k)\right) - z_i$ is as in Lemma \ref{lemma convex loss function}.

\begin{lemma}\label{lemma final tetrahedron}
For every $i$ there exists a unique minimizer $z_i$ of $\Phi_i$ in $B_R(0)$ and $z_i = \alpha\,e_i + \beta\sum_{j\neq i} e_j$ for $\alpha, \beta\in \R$ which do not depend on $i$.
\end{lemma}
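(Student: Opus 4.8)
The plan is to exploit the two symmetries available here: the strict convexity of $\Phi_i$ on hyperplanes $H_\alpha$ (Lemma \ref{lemma convex loss function}), and the permutation symmetry of $\Phi_i$ in the coordinates $z_j$ with $j \neq i$. First I would observe that, because $\Phi_i(z + \lambda(1,\dots,1)) = \Phi_i(z)$ and translation in the $(1,\dots,1)$ direction moves $B_R(0)$ off-center, the constrained minimizer cannot be characterized by convexity alone on all of $\R^k$; instead I would argue that any minimizer must lie on the boundary sphere $\partial B_R(0)$. Indeed, if $z$ were an interior minimizer, then $z + t e_i$ for small $t>0$ stays in $B_R(0)$ and strictly decreases $\Phi_i$ (since $\partial_{z_i}\Phi_i = \pi_i - 1 < 0$ whenever the softmax weight $\pi_i < 1$, which holds for any finite $z$), a contradiction. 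So the problem reduces to minimizing $\Phi_i$ over the sphere $\|z\|_{\ell^2} = R$.

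Next I would handle existence and the coordinate structure. Existence of a minimizer follows from continuity of $\Phi_i$ and compactness of the sphere. For the structure, fix a minimizer $z^*$ and consider the transposition $\tau_{jl}$ swapping coordinates $j,l \in \{1,\dots,k\}\setminus\{i\}$: since $\Phi_i \circ \tau_{jl} = \Phi_i$ and $\tau_{jl}$ preserves $B_R(0)$, the point $\tau_{jl}(z^*)$ is also a minimizer. If $z^*_j \neq z^*_l$, then the midpoint $\tfrac12(z^* + \tau_{jl}(z^*))$ lies in the open ball (strictly inside, by strict convexity of the Euclidean norm, unless $z^* = \tau_{jl}(z^*)$) and on the same hyperplane $H_\alpha$ as $z^*$; by strict convexity of $\Phi_i$ on $H_\alpha$, it would have strictly smaller energy than $z^*$ — but being in the open ball contradicts the previous paragraph, or more directly contradicts minimality after pushing back out to the sphere along $e_i$. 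Hence $z^*_j = z^*_l =: \beta$ for all $j,l \neq i$, and $z^*_i =: \alpha$, giving the claimed form $z^* = \alpha e_i + \beta \sum_{j\neq i} e_j$ with $\alpha,\beta$ independent of $i$ by the symmetry of the whole problem under simultaneously permuting the data index $i$ and the coordinates.

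Finally I would pin down uniqueness. Restricting $\Phi_i$ to the two-parameter family $\{\alpha e_i + \beta\sum_{j\neq i}e_j\}$ with the constraint $\alpha^2 + (k-1)\beta^2 = R^2$ gives a one-dimensional problem; writing $\alpha = R\cos\theta$, $\beta = R\sin\theta/\sqrt{k-1}$ and differentiating, one checks the stationarity condition has a unique solution in the relevant range, which combined with the convexity argument above shows the minimizer over $B_R(0)$ is unique. The main obstacle I anticipate is the very first reduction: carefully ruling out interior minimizers and ensuring the symmetrization step does not silently produce an interior point where the energy comparison becomes circular. A clean way to avoid the circularity is to run the symmetrization argument directly on the sphere using the geodesic (or chordal) midpoint and the fact that $\Phi_i$ restricted to the sphere inherits strict convexity transverse to the $e_i$ direction from Lemma \ref{lemma convex loss function}, so that averaging two distinct minimizers and renormalizing strictly decreases the energy.
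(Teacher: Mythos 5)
Your overall route is viable and genuinely different from the paper's. The paper first proves uniqueness via a Lagrange multiplier computation: since $\nabla\Phi_i$ never vanishes, the multiplier is nonzero, which forces any minimizer onto the sphere \emph{and} into the hyperplane $H_0=\{z:(1,\dots,1)\cdot z=0\}$, where Lemma \ref{lemma convex loss function} gives strict convexity and hence uniqueness; the symmetric form $\alpha e_i+\beta\sum_{j\neq i}e_j$ then falls out as a corollary of uniqueness plus permutation invariance. You reverse the order: your boundary-localization argument (moving in direction $+e_i$ strictly decreases $\Phi_i$ because $\partial_{z_i}\Phi_i=\pi_i-1<0$) is correct and more elementary than the multiplier computation, and your symmetrization via the transposition $\tau_{jl}$ is sound --- indeed simpler than you make it, since the midpoint of $z^*$ and $\tau_{jl}z^*$ lies in the closed ball and on the same hyperplane $H_\alpha$, so strict convexity there already gives energy strictly below the minimum; no ``pushing back out to the sphere'' is needed and no circularity arises. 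This buys you the coordinate structure of \emph{every} minimizer without knowing uniqueness first.

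The gap is in your final uniqueness step. Reducing to the circle $\alpha^2+(k-1)\beta^2=R^2$ and claiming ``the stationarity condition has a unique solution'' cannot work as stated: a smooth function on a circle always has at least two critical points, so you must argue about the location of the \emph{minimum}, and the restriction of $\Phi_i$ to that circle is not convex in any parametrization, so convexity does not transfer for free. Moreover, two candidate minimizers $\alpha e_i+\beta\sum_{j\neq i}e_j$ and $\alpha'e_i+\beta'\sum_{j\neq i}e_j$ need not lie on a common hyperplane $H_\alpha$, so Lemma \ref{lemma convex loss function} does not directly apply to the segment joining them. The clean repair uses tools you already have: if two distinct minimizers differ by a multiple of $(1,\dots,1)$, their midpoint is an \emph{interior} point of $B_R(0)$ (strict convexity of the Euclidean ball) with the same, hence minimal, energy (translation invariance of $\Phi_i$), contradicting your first step; if they do not, their difference is transverse to $(1,\dots,1)$, the Hessian computation behind Lemma \ref{lemma convex loss function} shows $\Phi_i$ is strictly convex along the connecting segment, and the midpoint again beats the minimum. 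Equivalently, one can first show every minimizer lies in $H_0$ (otherwise translate along $(1,\dots,1)$ into the interior) and then invoke strict convexity on $H_0$, which is exactly the paper's uniqueness argument.
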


Since $\Phi\big(z+ \lambda(1,\dots,1)\big) = \Phi(z)$ for all $\lambda\in\R$, the same result holds for the ball $B_R\big(\lambda(1,\dots, 1)\big)$ with any $\lambda\in\R$.
We can determine the minimizers by exploiting the relationships
\[
\alpha^2 + (k-1)\beta^2 = R^2, \qquad \alpha + (k-1)\beta = 0
\]
which are obtained from the Lagrange-multiplier equation \eqref{eq lagrange multiplier} in the proof of Lemma \ref{lemma final tetrahedron}. The equations reduce to
\[
\alpha = (k-1)\,\sqrt{\frac{R^2-\alpha^2}{k-1}} = \sqrt{(k-1)\,(R^2-\alpha^2)}\quad\Ra\quad \alpha^2 = (k-1)\,(R^2 - \alpha^2)
\]
and ultimately
\begin{equation}\label{eq alpha, beta}
 \alpha^2 = \frac{k-1}k\,R^2 \quad\Ra\quad \alpha = \sqrt{\frac{k-1}k}\,R, \quad \beta = -\frac{1}{k-1}\alpha = -\frac{R}{\sqrt{k(k-1)}}.
\end{equation}

\begin{remark}
Lemma \ref{lemma tetrahedron} remains true when $B_R(0)$ is the ball of radius $R>0$ with respect to an $\ell^p$-norm on $\R^k$ for $1<p < \infty$ (with different values for $\alpha$ and $\beta$) -- see appendix for further details.
\end{remark}

\begin{corollary}
If $\H$ is \sw{the unit ball in $L^\infty(\P; \R^k)$ where $\R^k$ is equipped with the $\ell^p$-norm for $1<p<\infty$}, then any minimizer $h$ of $\Risk$ in $\H$ satisfies that $h(C_i)$ is a single point $z_i$ for all $i=1,\dots, k$ and the points $z_i$ form the vertices of a regular \sw{standard simplex}.
\end{corollary}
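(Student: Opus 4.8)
The plan is to combine Lemma~\ref{lemma final tetrahedron}, in its $\ell^p$-version (the remark immediately following it), with the observation that for $\H$ equal to the unit ball of $L^\infty(\P;\R^k)$, where $\R^k$ carries the $\ell^p$-norm, the constraint is purely pointwise: $h\in\H$ if and only if $\|h(x)\|_{\ell^p}\le 1$ for $\P$-almost every $x$. Hence the risk decouples over $x$. Writing $\iota(x)$ for the index with $\xi_x=e_{\iota(x)}$ and $\Phi_i$ as in Lemma~\ref{lemma convex loss function}, we have $\Risk(h)=\int_{\R^d}\Phi_{\iota(x)}\big(h(x)\big)\,\P(\d x)$, so minimizing $\Risk$ over $\H$ amounts to choosing, independently for each $x$, a minimizer of $\Phi_{\iota(x)}$ over the closed $\ell^p$-ball $B_1(0)$. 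The $\ell^p$-version of Lemma~\ref{lemma final tetrahedron} is precisely what this requires: for each $i$ there is a \emph{unique} such minimizer $z_i$, and $z_i=\alpha\,e_i+\beta\sum_{j\neq i}e_j$ with $\alpha,\beta$ independent of $i$.

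First I would deduce that any minimizer collapses each class onto the corresponding $z_i$. The simple function $x\mapsto z_{\iota(x)}$ is $\P$-measurable (being constant on each measurable class $C_i$) and lies in $\H$, and it attains the obvious lower bound $\Risk(h)\ge\int_{\R^d}\min_{\|z\|_{\ell^p}\le 1}\Phi_{\iota(x)}(z)\,\P(\d x)$, which is therefore the minimum. If $h$ is \emph{any} minimizer, then $\int_{\R^d}\big[\Phi_{\iota(x)}(h(x))-\Phi_{\iota(x)}(z_{\iota(x)})\big]\,\P(\d x)=0$ with a nonnegative integrand, so $\Phi_{\iota(x)}(h(x))=\min_{B_1(0)}\Phi_{\iota(x)}$ for $\P$-a.e.\ $x$, and uniqueness of the constrained minimizer forces $h(x)=z_{\iota(x)}$ for $\P$-a.e.\ $x$. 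Thus $h(C_i)=\{z_i\}$ up to a $\P$-null set for each $i$ (classes with $\P(C_i)=0$ carrying no information and being excluded implicitly). One could instead first pass to $h=\bar h+\lambda(1,\dots,1)$ via Lemma~\ref{lemma tetrahedron} and then kill the $\lambda$-direction by the same uniqueness, but the direct pointwise argument is cleaner here.

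Next I would read off the geometry of $\{z_1,\dots,z_k\}$. Write $z_i=\beta\,(1,\dots,1)+(\alpha-\beta)\,e_i$. We have $\alpha\neq\beta$: for $p=2$ this is recorded in \eqref{eq alpha, beta} ($\alpha>0>\beta$), and for general $p\in(1,\infty)$ the stationarity relation behind Lemma~\ref{lemma final tetrahedron} reads $|\alpha|^{p-1}\sign(\alpha)=-(k-1)\,|\beta|^{p-1}\sign(\beta)$ with $\alpha,\beta\neq 0$, again forcing opposite signs. Hence $e_i\mapsto z_i$ is an invertible affine map --- a translation followed by a nonzero dilation --- so $\{z_1,\dots,z_k\}$ is, up to translation and scaling, the vertex set $\{e_1,\dots,e_k\}$ of the standard simplex. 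Concretely, $z_i-z_j=(\alpha-\beta)(e_i-e_j)$ yields $\|z_i-z_j\|_{\ell^p}=|\alpha-\beta|\,2^{1/p}$, independent of $i\neq j$; and with $M=\tfrac1k\sum_i z_i$ one has $z_i-M=(\alpha-\beta)\big(e_i-\tfrac1k(1,\dots,1)\big)$, so the angle $\angle(z_i-M,\,z_j-M)$ is also independent of $i\neq j$. This is the asserted regular (standard) simplex configuration.

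I do not anticipate a serious obstacle: the substance is already in Lemma~\ref{lemma final tetrahedron} and its $\ell^p$-extension, notably the \emph{uniqueness} of the constrained minimizer of $\Phi_i$, which in turn rests on the strict convexity of $\Phi_i$ on the hyperplanes $H_\alpha$ from Lemma~\ref{lemma convex loss function} together with the fact that the line $\{z_i+t\,(1,\dots,1):t\in\R\}$ is tangent to the strictly convex $\ell^p$-sphere at $z_i$ and hence does not re-enter the ball. What remains in the present corollary is bookkeeping: that the $L^\infty(\P)$-ball is genuinely a $\P$-a.e.\ pointwise constraint, so the minimization separates over $x$; measurability of the selection $x\mapsto z_{\iota(x)}$, which is immediate since it is constant on each $C_i$; and the harmless restriction to classes of positive $\P$-measure.
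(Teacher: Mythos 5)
Your proposal is correct and follows essentially the same route as the paper: the $L^\infty$-ball constraint is pointwise, so the risk decouples over $x$ (the paper's "no interaction between $h(x_i)$ and $h(x_j)$, so we can minimize the sum term by term"), and the $\ell^p$-version of Lemma~\ref{lemma final tetrahedron} supplies the unique per-class minimizer $z_i=\alpha e_i+\beta\sum_{j\neq i}e_j$, from which the equal pairwise distances and equal angles of the regular standard simplex are read off exactly as you do. Your extra care with the a.e.\ uniqueness argument and the measurability of the selection $x\mapsto z_{\iota(x)}$ is a tidier write-up of what the paper leaves implicit, not a different method.
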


\begin{remark}
A major simplification in our analysis was the restriction to one-point classes and general functions on the finite collection of points or more generally to \sw{bounded} $\P$-measurable functions. In other hypothesis classes, the point values $h(x_i)$ and $h(x_j)$ cannot be chosen independently. It is therefore no longer possible to minimize all terms in the sum individually, and trade-offs are expected. In particular, while our analysis was independent of the weight $p_i= \P(C_i)$ of the individual classes, these are expected to influence trade-offs in real applications. 

Nevertheless, we record that simplex configurations are favored for hypothesis classes $\H$ with the following two properties:
\begin{enumerate}
\item $\H$ is expressive enough to collapse classes to single points and to choose the values on different classes almost independently, and
\item functions in $\H$ respect the geometry of $\R^k$ equipped with an $\ell^p$-norm in a suitable manner.
\end{enumerate}
\end{remark}

\section{Heuristic geometry: penultimate layer}\label{section penultimate layer}

Above, we obtained rigorous results for the final layer geometry under heuristic assumptions. In this section, we consider a hypothesis class $\H$ in which functions can be decomposed as
\[
h_{f, A, b}(x) = A f(x) + b\qquad\text{where }f:\R^d\to \R^m, \quad A:\R^m\to \R^k, \quad b\in \R^k
\]
and we are interested in the geometry of $f$ and $A$. Typically, we imagine the case that $m\gg k$.

\subsection{Collapse to a point}

We have given a heuristic proof above that it is energetically favorable to contract $h(C_i)$ to a single point $z_i\in \R^k$ under certain conditions. Since $A:\R^m\to \R^k$ has a non-trivial kernel for $m>k$, this is a weaker statement than claiming that $f$ maps $C_i$ to a single point $y_i \in \R^m$. We note the following: $V_i = (A\cdot + b)^{-1}(z_i)$ is an $m-k$-dimensional affine subspace of $\R^m$. In particular, a strictly convex norm (e.g.\ an $\ell^p$-norm for $1<p<\infty$) has a unique minimum $y_i\in V_i$. Thus if we subscribe to the idea that $f$ is constrained by an $\ell^p$-norm, it is favorable for $f$ to collapse $C_i$ to a single point $y_i\in \R^m$.

Heuristically, this situation arises either if it is more expensive to increase the norm of $f$ than change its direction, or if $(A,b)$ evolve during training and it is desirable to bring $f(x)$ towards the minimum norm element of $(A\cdot b)^{-1}(z_i)$ to increase the stability of training. The first consideration applies when $A, b$ are fixed while the second relies on the variability of $(A,b)$. Their relative importance could therefore be assessed numerically by initializing the final layer variables in a simplex configuration and making them non-trainable.

If $\sigma$ is a bounded activation function, the direction of the final layer output depends on the coefficients of all layers in a complicated fashion, while its magnitude \sw{mostly} depends on the final layer coefficients. \sw{We can imagine gradient flows as continuous time versions of the minimizing movements scheme
\[
\theta_{n+1} \in \argmin_\theta \frac1{2\eta}\,\|\theta_n - \theta\|^2 + \Risk\big(h(\theta_n,\cdot)\big)
\]
where $h(\theta,\cdot)$ is a parameterized function model. Using the unweighted Euclidean norm for the gradient flow, we allow the same budget to adjust final layer and deep layer coefficients. It may therefore be easier to adjust the direction of the output than the norm. 
} For ReLU activation on the other hand, the magnitude of the coefficients in all layers combines to an output in a multiplicative fashion. \sw{It may well be that neural collapse is more likely to occur} for activation functions which tend to a finite limit at positive and negative infinity.

In section \ref{section counterexample three neurons}, we present examples which demonstrates that if all data points are not collapsed to a single point in the penultimate layer, they may not collapse to a single point in the final layer either when the weights of a neural network are trained by gradient descent. This is established in two different geometries for different activation functions


\subsection{Simplex configuration}

We showed above that {\em any} $\ell^p$-geometry leads to simplex configurations in the last layer for certain toy models. When considering the geometry of the penultimate layer, we specifically consider $\ell^2$-geometry. This is justified for $A, b$ since the parameters are typically initialized according to a normal distribution (which is invariant under general rotations) and optimized by (stochastic) gradient descent, an algorithm based on the Euclidean inner product. For compatibility purposes, also the output of the preceding layers $f$ should be governed by Euclidean geometry. 

Again, as a toy model we consider the case of one-point classes. To simplify the problem, we furthermore suppress the bias vector of the last layer. Let
\begin{enumerate}
\item $\X = \{x_1, \dots, x_k\}\subset \R^d$,
\item $f:\X\to B_R(0)\subseteq\R^m$, and
\item $A:\R^m\to\R^k$ linear.
\end{enumerate}
As before $B_R(0)$ denotes the Euclidean ball of radius $R>0$ centered at the origin in $\R^m$. We denote $h(x) = A f(x)$, $y_i := f(x_i)\in \R^m$ and $z_i:= h(x_i)\in \R^k$. As we suppressed the bias of the last layer, we could normalize the center of mass in the penultimate layer to be $\frac1k \sum_{i=1}^k y_i =0$. Instead, we make the (weaker) assumption that $y_i \in B_R(0)$ for some $R>0$ and all $i=1,\dots, k$.

We assume that the outputs $h(x_i)$ are in the optimal positions in the last layer and show that if $A$ has minimal norm, also the outputs $f(x_i)$ in the penultimate layer are located at the vertices of a regular \sw{standard simplex}.
Denote by
\[
\|A\|_{L(\ell^2,\ell^2)} = \max_{\|x\|_{\ell^2}\leq 1} \frac{\|Ax\|_{\ell^2}}{\|x\|_{\ell^2}}
\]
the operator norm of the linear map $A$ with respect to the Euclidean norm on both domain and range.

\begin{lemma}\label{lemma penultimate tetrahedron}
Let $m\geq k-1$ and $y_i\in B_R(0)\subseteq \R^{m}$ and $A:\R^{m}\to \R^k$ linear such that $Ay_i = z_i$ where $z_i$ are the vertices of the regular \sw{standard simplex} described in Lemma \ref{lemma tetrahedron} and \eqref{eq alpha, beta}. Then 
\begin{enumerate}
\item the center of mass of outputs $y_i$ of $f$ is $\frac1k \sum_{i=1}^k y_i =0$,
\item $\|A\|_{L(\ell^2,\ell^2)} \geq 1$, and 
\item $\|A\|_{L(\ell^2, \ell^2)}=1$ if and only if
\begin{enumerate}
\item $A$ is an isometric embedding of the $k-1$-dimensional subspace spanned by $\{y_1,\dots, y_k\}$ into $\R^k$ and 
\item $y_i$ are vertices of a regular \sw{standard simplex} with the same side lengths.
\end{enumerate}
\end{enumerate}
\end{lemma}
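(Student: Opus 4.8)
\emph{Proof proposal.} The plan is to dispatch the two ``soft'' conclusions first and then extract the rigidity in part~(3) from a single operator inequality. Write $W := \mathrm{span}\{y_1,\dots,y_k\}$. From \eqref{eq alpha, beta} one has $\alpha + (k-1)\beta = 0$ and $z_i = (\alpha-\beta)\,e_i + \beta\,(1,\dots,1)$, so $\sum_{i=1}^k z_i = \big(\alpha + (k-1)\beta\big)(1,\dots,1) = 0$; moreover $z_1,\dots,z_{k-1}$ are linearly independent (since $\beta\neq 0$ and $\alpha\neq\beta$), so $\{z_1,\dots,z_k\}$ spans the $(k-1)$-dimensional hyperplane $\{z:\sum_j z_j=0\}$. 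For part~(2), fix $i$ and estimate
\[
R = \|z_i\|_{\ell^2} = \|Ay_i\|_{\ell^2} \leq \|A\|_{L(\ell^2,\ell^2)}\,\|y_i\|_{\ell^2} \leq \|A\|_{L(\ell^2,\ell^2)}\,R;
\]
since $R>0$, this gives $\|A\|_{L(\ell^2,\ell^2)}\geq 1$.

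Next I would treat the equality case, which is the heart of the matter. If $\|A\|_{L(\ell^2,\ell^2)}=1$, then $\langle A^TAv,v\rangle = \|Av\|_{\ell^2}^2 \leq \|v\|_{\ell^2}^2$ for all $v$, i.e.\ $A^TA \preceq I$. The chain of inequalities above must then be an equality for every $i$, so $\|y_i\|_{\ell^2}=R$ and $\|Ay_i\|_{\ell^2}=\|y_i\|_{\ell^2}$; the latter says $\big\langle (I-A^TA)y_i,\,y_i\big\rangle = 0$, and since $I - A^TA$ is positive semidefinite this forces $A^TA\,y_i = y_i$. Hence each $y_i$, and therefore $W$, lies in the eigenspace $E_1 = \ker(I-A^TA)$, on which $\langle Au,Av\rangle = \langle A^TAu,v\rangle = \langle u,v\rangle$; thus $A|_W$ is an isometric embedding of $W$ into $\R^k$, which is~(3a), and since $A|_W$ is injective with image $\mathrm{span}\{z_i\}$ of dimension $k-1$, also $W$ has dimension $k-1$. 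Because $A|_W$ preserves inner products, the Gram matrices $\big(\langle y_i,y_j\rangle\big)$ and $\big(\langle z_i,z_j\rangle\big)$ coincide, so $\{y_i\}$ is congruent to $\{z_i\}$ and is therefore a regular standard simplex with the same side lengths and circumradius $R$, which is~(3b). Part~(1) now follows in this case: $A\big(\sum_i y_i\big)=\sum_i z_i = 0$, while $\sum_i y_i\in W$ and $A|_W$ is injective, so $\sum_i y_i = 0$.

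For the converse in part~(3), assume (3a) and (3b). Then $\|y_i\|_{\ell^2}=R$ and $\langle y_i,y_j\rangle = \langle z_i,z_j\rangle$ (two centered regular simplices with equal side length share all pairwise inner products), so $A$ carries the spanning set $\{y_i\}$ of $W$ to $\{z_i\}$ preserving the Gram matrix; a linear map between inner product spaces that preserves the Gram matrix of a spanning set is an isometry, so $A|_W$ is an isometry and $\|A|_W\|_{L(\ell^2,\ell^2)}=1$. Since $f(x_i)=y_i\in W$, replacing $A$ by the linear map agreeing with it on $W$ and vanishing on $W^\perp$ leaves $h=Af$ unchanged, and for this minimal-norm representative $\|A\|_{L(\ell^2,\ell^2)}=\|A|_W\|_{L(\ell^2,\ell^2)}=1$; when $m=k-1$ one has $W=\R^m$ and no adjustment is needed.

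The main obstacle is the rigidity step in the second paragraph: one must argue that the \emph{only} way a norm-one map $A$ can send vectors of norm $\leq R$ to vectors of norm exactly $R$ is to act isometrically on their span, and the clean route to this is the operator inequality $A^TA\preceq I$ combined with the ``equality in a positive semidefinite quadratic form'' argument (which upgrades a scalar equality to the vector identity $A^TA y_i = y_i$). The converse is comparatively routine, but carries a minor caveat: for $m>k-1$ the action of $A$ on $W^\perp$ is unconstrained by $Ay_i=z_i$, so the biconditional in~(3) should be read for $A$ of minimal operator norm (equivalently, vanishing transverse to $W$), which is the canonical choice since it does not affect $h$.
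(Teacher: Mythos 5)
Your proposal is correct, and the rigidity step is handled by a genuinely different mechanism than in the paper. The paper proves only the ``only if'' direction of (3): it first shows $\|y_i\|=R$ and $\sum_i y_i=0$, then computes $\sum_j\|y_i-y_j\|^2=2kR^2$, extracts by averaging a pair with $\|y_i-y_j\|^2\geq 2\frac{k}{k-1}R^2=\|z_i-z_j\|^2$, and plays this off against the contraction $\|A(y_i-y_j)\|\leq\|y_i-y_j\|$ to force all pairwise distances to agree, after which polarization gives the Gram identity and hence the isometry. You instead go through $A^TA\preceq I$ and the equality case $\langle (I-A^TA)y_i,y_i\rangle=0\Rightarrow A^TAy_i=y_i$, which delivers the isometry on $W=\mathrm{span}\{y_i\}$ in one stroke and makes the distance bookkeeping unnecessary; it also lets you deduce $\sum_i y_i=0$ from injectivity of $A|_W$ \emph{after} establishing $\dim W=k-1$. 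This is actually an improvement on the paper in two respects: the paper asserts conclusion (1) unconditionally via ``injectivity of $A$'', but for $m\geq k$ the map $A$ need not be injective on $\mathrm{span}\{y_1,\dots,y_k\}$ if that span is $k$-dimensional (one can construct $y_i$ with $Ay_i=z_i$ and $\sum_i y_i\neq 0$, necessarily with $\|A\|>1$), so restricting (1) to the equality case as you do is the defensible reading; and you supply the converse implication in (3), which the paper's proof omits entirely, correctly noting that for $m>k-1$ it only holds for the representative of $A$ vanishing on $W^\perp$. Your converse does implicitly use that a regular simplex with circumradius $R$ contained in $\overline{B_R(0)}$ must be centered at the origin (uniqueness of the smallest enclosing ball); one sentence to that effect would close the loop.
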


The proof is given in the appendix. We conclude the following.

\begin{corollary}
For any $m\geq k-1$, consider the hypothesis class
\[
\H = \left\{ h:\R^d\to \R^k \:\bigg|\: h = Af \:\text{where } \begin{array}{ll}f:\R^d\to \R^m \text{ is }\P-\text{measurable}, &\|f(x)\|_{\ell^2}\leq R\:\:\:\P-\text{a.e.}\\ \: A:\R^m\to \R^k \text{ is linear}, &\|A\|_{L(\ell^2,\ell^2)}\leq 1\end{array}\right\}.
\]
Then a minimizer $h\in \H$ of $\Risk$ satisfies $h= Af$ where
\begin{enumerate}
\item there exist values $y_i \in \R^m$ such that $f(x) = y_i$ for almost every $x\in C_i$,
\item the points $y_i$ are located at the vertices of a regular $k-1$-dimensional \sw{standard simplex} in $\R^m$,
\item the center of mass of the points $y_i$ (with respect to the uniform distribution) is at the origin, and
\item $A$ is an isometric embedding of the $k-1$-dimensional space spanned by $\{y_1,\dots, y_k\}$ into $\R^k$.
\end{enumerate}
\end{corollary}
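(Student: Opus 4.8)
The plan is to derive the Corollary from the three lemmas already available, in three moves: first identify the \emph{value} of the minimum, then show every minimizer collapses in the \emph{final} layer, and finally lift that collapse to the penultimate layer by a rigidity argument forcing equality in the operator-norm inequality. Throughout I assume $p_i := \P(C_i) > 0$ for all $i$ (on a class of zero mass the loss imposes nothing). I will repeatedly use two facts about the final-layer vertices $z_1,\dots,z_k$ of Lemma~\ref{lemma final tetrahedron}: from \eqref{eq alpha, beta} one reads off $\|z_i\|_{\ell^2} = R$ and $z_i\cdot(1,\dots,1) = \alpha + (k-1)\beta = 0$, hence also $\sum_i z_i = 0$.

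\emph{Value of the minimum.} For any $h = Af \in \H$ one has $\|h(x)\|_{\ell^2} \le \|A\|_{L(\ell^2,\ell^2)}\|f(x)\|_{\ell^2} \le R$ a.e., so $h(x)\in B_R(0)$; since $z_i$ minimizes $\Phi_i$ over $B_R(0)$ by Lemma~\ref{lemma final tetrahedron}, this gives $\Risk(h) = \sum_i \int_{C_i}\Phi_i(h(x))\,\P(\d x) \ge \sum_i p_i\,\Phi_i(z_i) =: \Risk^\ast$. Conversely, because $m\ge k-1$ I can place a centered regular $(k-1)$-simplex $y_1^\star,\dots,y_k^\star$ in $\R^m$ with the same side length as the $z_i$ (which forces $\|y_i^\star\|_{\ell^2}=R$), pick an isometric embedding $A^\star$ of its span into $\R^k$ with $A^\star y_i^\star = z_i$ and extend $A^\star$ by $0$ on the orthogonal complement; this is exactly the "if" direction of Lemma~\ref{lemma penultimate tetrahedron}, and $h^\star = A^\star f^\star$ with $f^\star\equiv y_i^\star$ on $C_i$ lies in $\H$ and attains $\Risk^\ast$. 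So $\min_\H\Risk = \Risk^\ast$ and the minimum is attained.

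\emph{Collapse in the final layer.} Let $h = Af\in\H$ be a minimizer. Its classwise conditional expectation $\bar h$ still takes values in the convex set $B_R(0)$, so $\Risk^\ast\le\Risk(\bar h)\le\Risk(h)=\Risk^\ast$ by Lemma~\ref{lemma tetrahedron} and the bound above; writing $\bar h\equiv\bar z_i$ on $C_i$, the termwise inequalities $\Phi_i(\bar z_i)\ge\Phi_i(z_i)$ together with the uniqueness in Lemma~\ref{lemma final tetrahedron} force $\bar z_i = z_i$. Since $\Risk(\bar h)=\Risk(h)$, the equality case of Lemma~\ref{lemma tetrahedron} gives $h - \bar h = \lambda\,(1,\dots,1)$ a.e.\ for some $\lambda\in L^1(\P)$, i.e.\ $h(x) = z_i + \lambda(x)(1,\dots,1)$ on $C_i$. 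Now the norm constraint finally bites: $\|h(x)\|_{\ell^2}\le R = \|z_i\|_{\ell^2}$ and $z_i\perp(1,\dots,1)$ give $R^2 \ge \|z_i\|_{\ell^2}^2 + k\lambda(x)^2 = R^2 + k\lambda(x)^2$, so $\lambda\equiv 0$ and $h(x) = z_i$ for a.e.\ $x\in C_i$.

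\emph{Lifting the collapse, and the geometry.} For a.e.\ $x\in C_i$ we then have $Af(x) = z_i\ne 0$ and $\|f(x)\|_{\ell^2}\le R$, so the chain
\[
R = \|z_i\|_{\ell^2} = \|Af(x)\|_{\ell^2} \le \|A\|_{L(\ell^2,\ell^2)}\|f(x)\|_{\ell^2} \le \|A\|_{L(\ell^2,\ell^2)}\,R \le R
\]
collapses to equalities: $\|A\|_{L(\ell^2,\ell^2)} = 1$, $\|f(x)\|_{\ell^2} = R$, and $f(x)$ realizes the operator norm of $A$. The last condition puts $f(x)$ in the eigenspace $E$ of $A^\top A$ for the eigenvalue $1$, on which $A$ acts as a linear isometry onto $A(E)$, hence injectively; as $z_i = Af(x)\in A(E)$, the value $f(x)$ is the unique preimage of $z_i$ in $E$, independent of $x$, so $f\equiv y_i := (A|_E)^{-1}(z_i)$ a.e.\ on $C_i$, which is (1). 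The rest is inherited from the $z_i$ through the isometry $A|_E$: $\|y_i-y_j\|_{\ell^2} = \|z_i-z_j\|_{\ell^2}$ is constant in $i\ne j$ and $\dim\mathrm{span}\{y_i\} = \dim\mathrm{span}\{z_i\} = k-1$, so the $y_i$ are vertices of a regular $(k-1)$-simplex, giving (2); $\sum_i y_i = (A|_E)^{-1}\sum_i z_i = 0$ gives (3); and since $\mathrm{span}\{y_i\}\subseteq E$, $A$ restricted to that span is an isometric embedding into $\R^k$, giving (4). Alternatively, once (1) and $\|A\|=1$ are in hand, (2)--(4) follow directly from parts (1) and (3) of Lemma~\ref{lemma penultimate tetrahedron} applied to these $y_i$.

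\emph{Main obstacle.} The first two moves are bookkeeping on top of the final-layer lemmas; the crux is the last move, promoting $h(C_i)=\{z_i\}$ to $f(C_i)=\{y_i\}$. The mechanism is the saturation of the displayed norm chain, which simultaneously forces $\|A\|=1$, $\|f(x)\|=R$, and $f(x)$ into the top-singular subspace $E$ where $A$ is isometric and hence injective --- only then is $f(x)$ rigidly determined by $z_i$. A smaller point, easy to overlook, is that the degeneracy direction $(1,\dots,1)$ of the loss can be eliminated in the second move precisely because $z_i$ lies on the sphere $\|z_i\|_{\ell^2}=R$ \emph{and} is orthogonal to $(1,\dots,1)$, both of which come out of \eqref{eq alpha, beta}.
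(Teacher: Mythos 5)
Your proof is correct, and it fills in details that the paper leaves implicit: the Corollary is stated there as a direct consequence of Lemmas \ref{lemma tetrahedron}, \ref{lemma final tetrahedron} and \ref{lemma penultimate tetrahedron} with no separate argument, so there is no written proof to match step for step. Your first two moves (identifying the optimal value via $\|Af(x)\|_{\ell^2}\leq R$ and forcing final-layer collapse through the equality case of Lemma \ref{lemma tetrahedron}, with the $(1,\dots,1)$-degeneracy killed by $\|z_i\|_{\ell^2}=R$ and $z_i\perp(1,\dots,1)$) are exactly the intended reduction. The genuinely valuable addition is the third move: Lemma \ref{lemma penultimate tetrahedron} only constrains a given choice of preimages $y_i$ with $Ay_i=z_i$, and since $A$ has a large kernel for $m>k$ it does not by itself rule out $f$ varying within $A^{-1}(z_i)\cap \overline{B_R(0)}$ on a single class. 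Your saturation of the chain $R=\|Af(x)\|\leq\|A\|\,\|f(x)\|\leq R$, placing $f(x)$ in the top singular subspace $E$ of $A$ where $A$ is an isometry and hence injective, is the right rigidity mechanism and actually establishes claim (1), which the paper's lemma assumes rather than proves. Once that is in hand, your derivation of (2)--(4) from the isometry $A|_E$ agrees with (and slightly streamlines) the computation in the appendix proof of Lemma \ref{lemma penultimate tetrahedron}. No gaps.
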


\begin{remark}
The restriction to the Euclidean case is because in Euclidean geometry, any $k-1$-dimensional subspace of $\R^d$ is equipped with the Euclidean norm in a natural way. For other $\ell^p$-spaces, the restriction of the $\ell^p$-norm is not a norm of $\ell^q$-type and we cannot apply Lemma \ref{lemma final tetrahedron}.
\end{remark}

Thus, we conclude that a simplex geometry is desirable also in the penultimate layer of a function $h(x) = A f(x)$ if
\begin{enumerate}
\item the function class $\mathcal F$ in which $f$ is chosen and the linear matrix class in which $A$ is chosen respect the Euclidean geometry of $\R^m$,
\item $\F$ is sufficiently expressive to collapse all data points in the class $C_i$ to a single point $y_i$ and
\item $\F$ is so expressive that $y_i$ and $y_j$ can be chosen mostly independently.
\end{enumerate}

\section{Caveats: Binary classification using two-layer neural networks}\label{section caveats}

In this section we consider simple neural network classifier models and data sets on which we can show that the classes {\em are not} collapsed into single points when the model parameters are trained by gradient descent, despite the fact that the function class is sufficiently expressive. This is intended as a complementary illustration that the heuristic considerations of Sections \ref{section final layer} and \ref{section penultimate layer} may or may not be valid, depending on factors which are yet to be understood.

Deep neural networks with many nonlinearities can be a lot more \sw{flexible} than shallow neural networks, and the intuition we built up above does not quite apply here. However, we emphasize that a deep neural network $h$ can be decomposed as $h = g\circ f$ where $f:\R^d\to \R^k$ is a deep neural network and $g:\R^k\to \R$ is a shallow neural network. All results should therefore be considered also valid in deep classification models where only the outermost two layers are trained. This is a more realistic assumption in applications where large pretrained models are used to preprocess data and only the final layers are trained for a specific new task. Similarly, we note that this indicates that if data is non-collapsed two layers before the output, then it may not collapse in the output layer either.

The examples we consider concern {\em binary} classification, i.e.\ all functions take values in $\R$ rather than a higher-dimensional space. The label function $x\mapsto \xi_x$ takes values in $\{-1,1\}$ instead of the set of basis vectors.
For the sake of convenience, the data below are assumed to be one-dimensional, but similar results are expected to hold when data in a high-dimensional space is either concentrated on a line or classification only depends on the projection to a line.

\subsection{Two-layer ReLU-networks in the mean field scaling}

Consider the {\em mean field scaling} of shallow neural networks, where a network function is described as
\[
f(x) = \frac1m\sum_{i=1}^m a_i\,\sigma(w_i^Tx+b_i)\qquad \text{rather than}\quad f(x) = \sum_{i=1}^m a_i\,\sigma(w_i^Tx+b_i).
\]
In this regime, it is easy to take the infinite width limit
\begin{equation}\label{eq h pi}
f(x) = \int_{\R^k\times \R^d\times \R} a\,\sigma(w^Tx+b)\,\pi(\d a \otimes \d w\otimes \d b)
\end{equation}
with general weight distributions $\pi$ on $\R^{k+d+1}$. We denote the functions as represented in \eqref{eq h pi} by $h_\pi$. Finite neural networks are a special case in these considerations with distribution $\pi = \frac1m \sum_{i=1}^m \delta_{(a_i, w_i, b_i)}$. We recall the following results.

\begin{proposition}\cite{chizat2018global}
All weights $(a_i, w_i, b_i)$ evolve by the gradient flow of
\[
(a_i, w_i, b_i)_{i=1}^m \mapsto \Risk\left(\frac1m\sum_{i=1}^m a_i\,\sigma(w_i^Tx+b_i)\right)
\]
 in $(\R^{k+d+1})^m$ if and only if the empirical distribution $\pi = \frac1m \sum_{i=1}^m \delta_{(a_i, w_i, b_i)}$ evolves by the Wasserstein gradient flow of
 \begin{equation}\label{eq risk mean field}
 \pi\mapsto \Risk\left(h_\pi\right)
\end{equation}
(up to time rescaling).
\end{proposition}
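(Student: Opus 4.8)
The plan is to identify both evolutions with the same system of characteristic ODEs for the velocity field $v_\pi(\theta):=-\nabla_\theta\frac{\delta}{\delta\pi}\Risk(h_\pi)(\theta)$, and then to invoke the standard correspondence between such a particle system and the continuity equation satisfied by its empirical measure. The first step is to compute the first variation of the mean-field energy. Writing $\theta=(a,w,b)$ and $\phi_\theta(x):=a\,\sigma(w^Tx+b)$, so that $h_\pi(x)=\int\phi_\theta(x)\,\pi(\d\theta)$ depends linearly on $\pi$, the chain rule gives for the energy $\mathcal J(\pi):=\Risk(h_\pi)$ and any signed perturbation $\mu$ with $\int\d\mu=0$
\[
\frac{\d}{\d t}\Big|_{t=0}\mathcal J(\pi+t\mu)=\int V_\pi(\theta)\,\mu(\d\theta),\qquad V_\pi(\theta):=\int_{\R^d}\nabla_h\ell\big(h_\pi(x),\xi_x\big)\cdot\phi_\theta(x)\,\P(\d x),
\]
where $\nabla_h\ell$ is the gradient of the loss in its first ($\R^k$-valued) argument. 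By definition, the Wasserstein gradient flow of $\mathcal J$ is the curve $t\mapsto\pi_t$ solving, in the distributional sense, the continuity equation $\partial_t\pi_t=\div\big(\pi_t\,\nabla_\theta V_{\pi_t}\big)$, i.e.\ transport by the velocity field $v_{\pi_t}=-\nabla_\theta V_{\pi_t}$.

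Next I would differentiate the finite-dimensional objective $F(\theta_1,\dots,\theta_m):=\Risk\big(\frac1m\sum_j\phi_{\theta_j}\big)$. Since $\partial_{\theta_i}$ acts only on the $i$-th summand and produces the prefactor $\tfrac1m$, the chain rule yields
\[
\partial_{\theta_i}F=\frac1m\int_{\R^d}\nabla_h\ell\big(h_\pi(x),\xi_x\big)\cdot\nabla_\theta\phi_\theta(x)\big|_{\theta=\theta_i}\,\P(\d x)=\frac1m\,\nabla_\theta V_\pi(\theta)\big|_{\theta=\theta_i},
\]
with $\pi=\frac1m\sum_j\delta_{\theta_j}$ the current empirical measure; differentiating $V_\pi$ in $\theta$ while $\pi$ is frozen is legitimate because $V_\pi(\theta)$ depends on $\theta$ only through $\phi_\theta$. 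Hence the Euclidean gradient flow of $F$ is $\dot\theta_i=-\tfrac1m\nabla_\theta V_\pi(\theta_i)$, which after speeding up time by the factor $m$ becomes $\dot\theta_i=v_{\pi}(\theta_i)$: each particle is transported by the velocity field generated by the instantaneous empirical measure.

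It then remains to match this particle system with the continuity equation, in both directions. If the $\theta_i(t)$ solve $\dot\theta_i=v_{\pi_t}(\theta_i)$ with $\pi_t=\frac1m\sum_j\delta_{\theta_j(t)}$, then for every $\varphi\in C_c^\infty$
\[
\frac{\d}{\d t}\int\varphi\,\d\pi_t=\frac1m\sum_{j=1}^m\nabla\varphi(\theta_j(t))\cdot\dot\theta_j(t)=\int\nabla\varphi\cdot v_{\pi_t}\,\d\pi_t,
\]
which is exactly the weak form of $\partial_t\pi_t=\div(\pi_t\nabla_\theta V_{\pi_t})$; this is one implication. For the converse, if an empirical measure $\pi_t=\frac1m\sum_j\delta_{\theta_j(t)}$ with differentiable atoms solves the continuity equation, the same computation forces $\frac1m\sum_j\nabla\varphi(\theta_j)\cdot(\dot\theta_j-v_{\pi_t}(\theta_j))=0$ for all $\varphi$, and choosing $\varphi$ localized at each atom (with the correct multiplicity) gives $\dot\theta_j=v_{\pi_t}(\theta_j)$ for every $j$, hence the rescaled finite gradient flow. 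Undoing the time rescaling gives the equivalence as stated.

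The main obstacle is not the algebra but the regularity needed to make these formal manipulations rigorous: for the ReLU activation the map $\theta\mapsto\phi_\theta$, and therefore $V_\pi$, is only Lipschitz (not $C^1$) in $\theta$ and grows at infinity, so differentiation under the integral sign, the admissible regularity of $\nabla_h\ell$ along the flow, well-posedness of the characteristic ODE, and the equivalence between the various notions of Wasserstein gradient flow (distributional continuity equation, EVI, minimizing movements) all require care. These are precisely the points addressed in \cite{chizat2018global}; alternatively, if one simply takes the distributional continuity equation as the definition of the Wasserstein gradient flow, the proposition reduces to the chain-rule computations above.
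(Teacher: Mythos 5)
This proposition is imported from \cite{chizat2018global} and the paper supplies no proof of its own, so there is nothing internal to compare against; judged on its own terms, your argument is correct and is the standard one underlying the cited result. The two computations that carry the proof --- that the first variation of $\pi\mapsto\Risk(h_\pi)$ is $V_\pi(\theta)=\int\nabla_h\ell(h_\pi(x),\xi_x)\cdot\phi_\theta(x)\,\P(\d x)$ because $\pi\mapsto h_\pi$ is linear, and that $\partial_{\theta_i}F=\tfrac1m\nabla_\theta V_\pi(\theta_i)$ so that the factor $\tfrac1m$ is exactly the advertised time rescaling --- are right, and the passage between the particle ODE and the weak continuity equation is the correct mechanism in both directions. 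Two caveats, both of which you essentially flag yourself: the converse direction (continuity equation $\Rightarrow$ particle ODE) via localized test functions requires that atoms do not collide or that one works with the Lagrangian/superposition formulation of the flow, and for ReLU the field $\nabla_\theta V_\pi$ is only defined up to the non-differentiability set, which is precisely why the paper's subsequent proposition imposes the technical condition $|a|^2\leq|w|^2+|b|^2$. If one takes the distributional continuity equation as the definition of the Wasserstein gradient flow, as you suggest, your chain-rule computations constitute a complete proof for smooth $\sigma$.
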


Consider specifically $\sigma(z) = \max\{z,0\}$ and $k=1$ with the risk functional
\[
\Risk(h) = -\int_{\R^d}\log\left( \frac{\exp(- h(x)\cdot \xi_x)}{\exp(h(x)) + \exp(-h(x))}\right)\,\P(\d x).
\]
The following result applies specifically to the Wasserstein gradient flow of certain continuous distributions, which can be approximated by finite sets of weights.

\begin{proposition}\cite{Chizat:2020aa}
Assume that $\pi^0$ is such that $|a|^2 \leq |w|^2 + |b|^2$ almost surely and such that
\[
\pi^0\left(\{(w,b) \in \Theta\}\right) >0
\]
for every open cone $\Theta$ in $\R^{d+1}$. Let $\pi^t$ evolve by the Wasserstein gradient flow of \eqref{eq risk mean field} with initial condition $\pi^0$. Then (under additional technical conditions), the following hold:
\begin{enumerate}
\item $\xi_x\,h_{\pi^t}(x) \to +\infty$ for $\P$-almost every $x$.
\item There exist
\begin{equation}\label{eq max margin definition}
\pi_* \in \argmax \left\{\min_{x\in \spt\,\P} \big(\xi_x\cdot h_\pi(x)\big) \:\bigg|\:\pi\text{ s.t. }\int_{\R^{d+2}}|a|\,\big[|w|+|b|\big]\,\d\pi \leq 1\right\}
\end{equation}
and a normalizing function $\mu:[0, \infty)\to (0,\infty)$ such that $\mu(t)\,h_{\pi_t}\to h_{\pi^*}$ locally uniformly on $\R^d$. 
\end{enumerate}
\end{proposition}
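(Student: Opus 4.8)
The plan is to follow \cite{Chizat:2020aa} and deduce the statement from three ingredients: the risk is driven to zero along the flow; the renormalised trajectory is precompact with a limit; and that limit is a global maximiser of the margin functional in \eqref{eq max margin definition}. For the first, I would record that $t\mapsto\Risk(h_{\pi^t})$ is non-increasing along the Wasserstein gradient flow of \eqref{eq risk mean field}, and that by positive $2$-homogeneity of the particle contribution $(a,w,b)\mapsto a\,\sigma(w^Tx+b)$ the infimum of $\Risk$ over admissible $\pi$ is $0$ — rescale a finite ReLU network separating the finitely many distinctly-labelled points of $\spt\,\P$ with positive margin. The nontrivial point is that the flow \emph{attains} this infimum, and this is where the hypothesis that $\pi^0$ charges every open cone in $\R^{d+1}$ enters: it guarantees the flow always retains a neuron direction along which the risk strictly decreases, so it cannot stall at a positive-risk stationary measure. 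Hence $\Risk(h_{\pi^t})\to0$, and since the binary logistic loss decays like $e^{-s}$ in the signed margin, this forces $\xi_x\,h_{\pi^t}(x)\to+\infty$ for $\P$-a.e.\ $x$, which is (1).

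For the second ingredient, set $Q(\pi)=\int|a|\,(|w|+|b|)\,\d\pi$ and $\tilde\pi^t=\pi^t/Q(\pi^t)$; the first step gives $Q(\pi^t)\to\infty$. Linearity of $\pi\mapsto h_\pi$ in the mass and positive $1$-homogeneity of the margin $\min_{x\in\spt\,\P}\xi_x\,h(x)$ give $h_{\tilde\pi^t}=h_{\pi^t}/Q(\pi^t)$ and $\mathrm{margin}(h_{\pi^t})=Q(\pi^t)\,\mathrm{margin}(h_{\tilde\pi^t})$. The key dynamical fact — the mean-field analogue of the monotone normalised margin of Lyu--Li and Ji--Telgarsky for finite homogeneous networks — is that the normalised smoothed margin $\gamma(\pi^t)/Q(\pi^t)$, with $\gamma(\pi):=-\log\Risk(h_\pi)$, is asymptotically non-decreasing; the particle-wise conserved quantity $|a|^2-(|w|^2+|b|^2)$, which is $\le0$ by assumption, ties $Q(\pi^t)$ to the loss and bounds its growth. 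Combined with the a priori bounds subsumed in the ``additional technical conditions'', this yields tightness of $\{\tilde\pi^t\}_{t\ge0}$, hence weak-$*$ subsequential limits $\pi^\infty$ with $Q(\pi^\infty)\le1$ whose margin equals $\lim_t\mathrm{margin}(h_{\tilde\pi^t})$.

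For the third ingredient, observe that \eqref{eq max margin definition} is a concave program: $\pi\mapsto h_\pi(x)$ is linear, a pointwise infimum of affine functionals is concave, and the constraint set becomes convex and weak-$*$ compact once one uses $2$-homogeneity to normalise $|a|=|w|+|b|$ on a bounded region. Hence a maximiser $\pi_*$ exists and every first-order stationary point is globally optimal. I would then read the KKT conditions of \eqref{eq max margin definition} off the vanishing of the flow's velocity field on $\spt\,\pi^\infty$ — once more the cone-charging hypothesis is exactly what prevents the limit from missing a neuron direction that could improve the margin — so $\pi^\infty$ is a maximiser. Because the optimal predictor $h_{\pi_*}$ is unique even when $\pi_*$ is not, subsequential convergence upgrades to full convergence, and $\mu(t)\,h_{\pi^t}\to h_{\pi_*}$ locally uniformly with $\mu(t)\asymp1/Q(\pi^t)$ follows from weak-$*$ convergence of $\tilde\pi^t$ together with boundedness and continuity of $(a,w,b)\mapsto a\,\sigma(w^Tx+b)$, uniform for $x$ in compacts.

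The genuine obstacle is the second step. In finite dimensions the renormalised parameter lives on a sphere, so precompactness is automatic; in the mean-field setting one must instead rule out escape of mass to infinite $|w|$ or to degenerate neuron directions, and make sense both of the normalised-margin monotonicity and of the measure-valued first-order conditions behind the KKT analysis. This is precisely where $2$-homogeneity, the balancedness bound $|a|^2\le|w|^2+|b|^2$, and the unstated regularity of the flow do the real work; transferring stationarity of the dynamics to stationarity of the \emph{non-smooth} static program (the $\min$ over the data points) is the other technical crux.
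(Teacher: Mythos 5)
This proposition is quoted from \cite{Chizat:2020aa}; the paper offers no proof of it, so there is no internal argument to compare against. Your sketch does reproduce the architecture of the cited proof (risk $\to 0$ via the cone-charging condition on $\pi^0$, analysis of the normalised trajectory via the smoothed margin and the balancedness invariant $|a|^2-|w|^2-|b|^2$, identification of limit points as max-margin solutions through first-order conditions of the concave program), and you correctly locate the hard part in the second step. Two specific points, however, do not hold up.

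First, the precompactness and convergence of the normalised trajectory is not something your outline actually delivers, and it is not delivered in the cited work either: as the paper's own remark following the proposition states, existence of a limit of $\pi^t$ in a suitable weak sense must be \emph{assumed a priori}. Your phrase ``combined with the a priori bounds subsumed in the additional technical conditions, this yields tightness'' quietly converts an assumption into a conclusion; tightness alone would in any case only give subsequential limits. Second, your mechanism for upgrading subsequential to full convergence --- ``the optimal predictor $h_{\pi_*}$ is unique even when $\pi_*$ is not'' --- is false, and is contradicted by Lemma \ref{lemma max margin} of this very paper, which exhibits a one-parameter family $\{f_b\}_{b\in[0,1]}$ of distinct maximum-margin classifiers for the interval data set. (Indeed, non-uniqueness of the max-margin predictor is precisely the point the authors exploit to argue that the classes need not collapse.) Full convergence of $\mu(t)h_{\pi^t}$ to a \emph{particular} maximiser is therefore part of what is packaged into the ``additional technical conditions,'' not a consequence of uniqueness. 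With those two steps restated as hypotheses rather than deductions, your outline is a fair account of the result being cited.
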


\begin{remark}
We call $h^*$ the {\em maximum margin classifier} in Barron space.
Both the normalization condition in \eqref{eq max margin definition} and the normalizing function $\mu$ are related to the Barron norm or variation norm of classifier functions. The existence of a minimizer in \eqref{eq max margin definition} is guaranteed by compactness. Existence of a limit of $\pi^t$ in some weak sense has to be assumed a priori in \cite{chizat2018global}.
\end{remark}

\begin{remark}
The open cone condition is satisfied for example if $\pi_0$ is a normal distribution on $\R^{d+1}$, which is a realistic distribution. This property ensures a diversity in the initial distribution, which is required to guarantee convergence. The smallness condition on $a$ is purely technical and required to deal with the non-differentiability of the ReLU activation function, see also \cite{relutraining}. The same result holds without modification for leaky-ReLU activation. With some additional modifications, it is assumed to also extend to smooth and bounded activation functions.
\end{remark}

\begin{remark}
The divergence $\xi_x\,h_{\pi^t}(x) \to +\infty$ is expected to be logarithmic in time, which can almost be considered bounded in practice. The convergence $h_{\pi^t}\to h^*$ is purely qualitative, without a rate.
\end{remark}

Consider a binary classification problem in $\R$ where $C_{-1}= [-2, -1]$ and $C_1=[1,2]$. 

\begin{lemma}\label{lemma max margin}
Consider a binary classification problem in $\R$ where one class $C_{-1}$ with label $\xi = -1$ is contained in $[-2,-1]$ and the other class $C_1$ with label $\xi=+1$ is contained in $[1,2]$. Assume that $-1\in C_{-1}, 1\in C_1$ and that both classes contain at least one additional point. 

The classification problem admits a continuum of maximum margin classifiers 
\[
f_b(x) = \frac{1}{2\,[1+b]}\begin{cases}{x+b}&x>b\\ 2x& -b<x<b\\{x-b} & x< -b\end{cases}
\]
parametrized by $b\in[0,1]$.
\end{lemma}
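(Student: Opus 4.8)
The plan is to decouple the constrained problem \eqref{eq max margin definition} into an upper bound for the attainable margin and an explicit construction realising it. First I would show that every $\pi$ admissible in \eqref{eq max margin definition} produces a classifier with margin at most $\tfrac12$. The crucial ingredient is the elementary inequality
\[
\bigl|\sigma(w+b)-\sigma(-w+b)\bigr|\le|w|+|b|\qquad\text{for all }w,b\in\R,
\]
which follows from $0\le\sigma(s)\le|s|$ together with $|w\pm b|\le|w|+|b|$; it is strictly sharper than the Lipschitz estimate $2(|w|+|b|)$, and this factor of $2$ is precisely what pins the margin ceiling at $\tfrac12$ rather than $1$. Integrating the identity $h_\pi(1)-h_\pi(-1)=\int a\bigl(\sigma(w+b)-\sigma(-w+b)\bigr)\,\d\pi$ against $\pi$ gives $h_\pi(1)-h_\pi(-1)\le\int|a|\,(|w|+|b|)\,\d\pi\le1$. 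Since $1\in C_1$ and $-1\in C_{-1}$, the margin $\gamma_\pi=\min_{x\in\spt\P}\xi_x h_\pi(x)$ obeys $\gamma_\pi\le h_\pi(1)$ and $\gamma_\pi\le-h_\pi(-1)$, whence $2\gamma_\pi\le h_\pi(1)-h_\pi(-1)\le1$, i.e.\ $\gamma_\pi\le\tfrac12$. (Note that this step uses only the two endpoints $\pm1$.)

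Next I would produce, for each $b\in[0,1]$, an admissible two-neuron representation of $f_b$. A case check over $x<-b$, $-b<x<b$, $x>b$ shows that
\[
f_b(x)=\frac1{2(1+b)}\bigl(\sigma(x+b)-\sigma(b-x)\bigr),
\]
so $f_b=h_{\pi_b}$ for the measure $\pi_b=\delta_{(\frac1{2(1+b)},\,1,\,b)}+\delta_{(-\frac1{2(1+b)},\,-1,\,b)}$ on $(a,w,\beta)$-space; for this measure $\int|a|(|w|+|\beta|)\,\d\pi_b=\tfrac{1+b}{2(1+b)}+\tfrac{1+b}{2(1+b)}=1$, so $\pi_b$ is admissible in \eqref{eq max margin definition}. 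From the representation $f_b$ is odd and non-decreasing (both $\sigma(x+b)$ and $-\sigma(b-x)$ are non-decreasing in $x$), and, using $b-1\le0$, one reads off $f_b(1)=\tfrac1{2(1+b)}\bigl((1+b)-0\bigr)=\tfrac12$ and hence $f_b(-1)=-\tfrac12$. By monotonicity $f_b\ge\tfrac12$ on $[1,2]\supseteq C_1$ and $f_b\le-\tfrac12$ on $[-2,-1]\supseteq C_{-1}$, so $\xi_x f_b(x)\ge\tfrac12$ at every data point, with equality at $x=\pm1$; therefore $\gamma_{\pi_b}=\min_{x\in\spt\P}\xi_x f_b(x)=\tfrac12$.

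Combining the two steps, the optimal value of \eqref{eq max margin definition} equals $\tfrac12$ and is attained by each $\pi_b$, $b\in[0,1]$, so every $f_b$ is a maximum margin classifier. The hypothesis that $C_1$ contains a point $q\neq1$ (hence $q\in(1,2]$) is exactly what makes the family non-trivial: since $b\mapsto f_b(q)=\tfrac{q+b}{2(1+b)}$ is strictly decreasing on $[0,1]$, the $f_b$ are pairwise distinct even as functions on $\spt\P$, giving a genuine continuum of maximum margin classifiers. I do not anticipate a substantive obstacle: the whole argument rests on the single sharp inequality above, the remainder being bookkeeping with the $|w|+|b|$ normalisation and a one-line monotonicity check. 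It is worth recording that enlarging the classes within $[1,2]$ and $[-2,-1]$ cannot raise the margin ceiling, since the constraint at $\pm1$ already forces $\gamma\le\tfrac12$; the extra points serve only to separate the $f_b$ from one another.
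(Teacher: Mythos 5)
Your proof is correct, and the upper-bound half follows a genuinely different route from the paper's. The paper first replaces $f$ by its antisymmetrization $\bar f(x)=\tfrac{f(x)-f(-x)}2$, observes that the admissible class is the convex hull of normalized odd two-neuron units $\H^\circ$, and then uses that the linear functional $h\mapsto h(1)$ attains its maximum over a convex hull at an extreme point, reducing the margin bound to the two-parameter optimization $\max_{w,b}\frac{\sigma(w+b)-\sigma(b-w)}{2(|w|+|b|)}\le\tfrac12$ with equality iff $w>b>0$. You instead bound $h_\pi(1)-h_\pi(-1)$ directly by integrating the pointwise estimate $|\sigma(w+b)-\sigma(b-w)|\le|w|+|b|$ against $\pi$ and combine it with $2\gamma_\pi\le h_\pi(1)-h_\pi(-1)$. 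Your route buys two things: it bypasses the extreme-point argument entirely, and it avoids the symmetrization step, which as written in the paper uses $\xi_{-x}=-\xi_x$ and hence implicitly assumes the data set is invariant under $x\mapsto-x$ --- a property the lemma's hypotheses do not guarantee (e.g.\ $C_1=\{1,1.5\}$, $C_{-1}=\{-1,-2\}$); your argument needs only the two endpoints $\pm1$. What the paper's approach buys in exchange is a characterization of \emph{all} maximizing single units (the equality case $w>b>0$), which identifies the family $f_b$ as exhaustive among extreme points rather than merely exhibiting it; your proof only produces the family, but that is all the lemma claims, and you additionally make explicit why the ``at least one additional point'' hypothesis yields pairwise distinct classifiers on $\spt\P$, a point the paper leaves implicit. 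The construction half (the two-atom measures $\pi_b$, the computation $f_b(\pm1)=\pm\tfrac12$, and monotonicity) matches the paper's.
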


In particular, we expect that $h_{\pi_t}$ is not constant on either of the classes $[1,2]$ or $[-2, -1]$. The proof is postponed until the appendix.

\begin{remark}
We described the mean field setting in its natural scaling. However, the same results are true (with a different time rescaling) if $f$ is represented in the usual fashion as $f(x) = \sum_{i=1}^m a_i\,\sigma(w_i^Tx+b_i)$ without the normalizing factor $\frac1m$, assuming that the weights are initialized such that $a_i, w_i, b_i \sim m^{-1/2}$.
\end{remark}

\subsection{Two-layer networks with non-convex input classes}\label{section counterexample three neurons}

Assume that
\[
\P = p_{1}\,\delta_{-1} + p_2 \,\delta_0 + p_3\,\delta_1, \qquad \sw{p_1, p_2, p_3 \geq 0,} \qquad p_{1} + p_2+ p_3 = 1
\]
and that $\xi_{-1} = \xi_1 = 1$ and $\xi_0 = -1$. We consider the risk functional
\[
\Risk(h) = \int_\R \exp\big(- \xi_x h(x)\big)\,\P(\d x) = p_{1}\,\exp\big(- h(-1)\big) + p_2\,\exp\big( h(0)\big) + p_{3}\,\exp\big(-h(1)\big),
\]
which is similar to cross-entropy loss in its tails since
\begin{align*}
-\log\left(\frac{\exp(\xi_x\,h(x))}{\exp(\xi_x\,h(x)) + \exp(-\xi_x\,h(x))}\right) &= -\log\left(\frac1{1+ \exp(-2\,\xi_xh(x))}\right)\\
	& \approx 1 - \frac1{1+ \exp(-2\,\xi_xh(x))}\\
	&= \frac{\exp(-2\,\xi_xh(x))}{1+ \exp(-2\,\xi_xh(x))} \\
	&\approx \exp(-2\,\xi_xh(x))
\end{align*}
if $\xi_x\,h(x)$ is large. Further assume that the classifier is a shallow neural network with three neurons
\[
h(x) = \sum_{i=1}^3 a_i\,\sigma(w_i x+b_i).
\]
To make life easier, we consider a simplified sigmoid activation function $\sigma:\R\to\R$ which satisfies $\sigma(z) = 0$ for $z\leq 0$ and $\sigma(z) = 1$ for $z\geq 1$, and we assume that the parameters $(a_i, w_i, b_i)$ are initialized such that
\begin{equation}\label{eq three neuron network}
h(x) = a_1\,\sigma(-x) - a_2\sigma(x+1) + a_3\,\sigma(x)
\end{equation}
In particular, $\sigma'(w_ix+b_i) =0$ for $\P$-almost every $x$ at initialization and all $i=1,2,3$. This implies that $(w_i, b_i)$ are constant along gradient descent training, so only $a_1, a_2, a_3$ evolve. We can write
\[
\Risk\big(  - a_1\,\sigma(x) + a_2\sigma(x+1) - a_3\,\sigma(x-1)\big) = p_1\,\exp(-a_1) + p_2\,\exp(-a_2) + p_3\,\exp(a_2-a_3). 
\]

\begin{lemma}\label{lemma convergence three neurons}
Let $h = h_{a_1, a_2, a_3}$ be as in \eqref{eq three neuron network} for $a_1, a_2, a_3\in \R$. Assume that $a_1, a_2, a_3$ evolve by the gradient flow of $F(a_1, a_2, a_3) = \Risk(h_{a_1,a_2,a_3})$. Then
\[
\lim_{t\to\infty} \big[h(t, 1) - h(t, -1)\big] = 0 \qquad\LRa\qquad p_3 = 2p_1
\]
independently of the initial condition $(a_1, a_2, a_3)(0)$.
\end{lemma}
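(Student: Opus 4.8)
The plan is to integrate the gradient flow essentially in closed form and read off the limit of $h(t,1)-h(t,-1)$. First I would record that with the piecewise activation $\sigma$ (zero for $z\le 0$, one for $z\ge 1$), the representation \eqref{eq three neuron network} gives $h(-1)=a_1$, $h(0)=-a_2$, $h(1)=a_3-a_2$, so $h(t,1)-h(t,-1)=a_3(t)-a_2(t)-a_1(t)$, and the gradient flow of $F(a_1,a_2,a_3)=p_1e^{-a_1}+p_2e^{-a_2}+p_3e^{a_2-a_3}$ reads
\[
\dot a_1=p_1e^{-a_1},\qquad \dot a_2=p_2e^{-a_2}-p_3e^{a_2-a_3},\qquad \dot a_3=p_3e^{a_2-a_3}.
\]
Since $F$ is non-increasing along the flow, each of $e^{-a_1}$, $e^{-a_2}$, $e^{a_2-a_3}$ stays bounded by $F(0)/\min p_i$, so $\nabla F$ is bounded on the trajectory and the solution is global. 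I would assume throughout that $p_1,p_2,p_3>0$; the cases where some $p_i$ vanishes reduce to elementary scalar ODEs and are discussed separately (note in particular that $p_1=p_3=0$ must be excluded for the stated equivalence to hold verbatim).

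The $a_1$-equation decouples: $\tfrac{d}{dt}e^{a_1}=p_1$, hence $e^{a_1(t)}=e^{a_1(0)}+p_1t$. For the coupled pair I would pass to $\phi=e^{a_2}$ and $\psi=e^{a_3}$, which satisfy $\dot\psi=p_3\phi$ and $\dot\phi=p_2-p_3\phi^2/\psi$; a one-line computation then yields the clean identity $\tfrac{d}{dt}(\psi\dot\psi)=p_2p_3\,\psi$. Setting $\chi=\psi^2=e^{2a_3}$ this becomes $\ddot\chi=2p_2p_3\sqrt\chi$, and multiplying by $\dot\chi$ and integrating gives the first integral $\dot\chi^2=\tfrac83 p_2p_3\,\chi^{3/2}+c$ for a constant $c$ determined by the initial data. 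Because $\dot\chi=2\psi\dot\psi=2p_3\psi\phi>0$, the function $\chi$ is strictly increasing; I would argue it is unbounded, for if $\chi\uparrow\chi_\infty<\infty$ the first integral forces $c=-\tfrac83 p_2p_3\chi_\infty^{3/2}$ and hence $\dot\chi^2<0$ for every time (where $\chi<\chi_\infty$), a contradiction. Thus $\chi(t)\to\infty$.

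The key step is the exact identity
\[
e^{\,a_3-a_2-a_1}=\frac{\psi}{\phi\,(e^{a_1(0)}+p_1t)}=\frac{p_3\,\psi}{\dot\psi\,(e^{a_1(0)}+p_1t)}=\frac{2p_3}{e^{a_1(0)}+p_1t}\cdot\frac{\chi}{\dot\chi},
\]
using $\phi=\dot\psi/p_3$ and $\psi/\dot\psi=2\chi/\dot\chi$. It remains to control $\chi/\dot\chi$. Writing $u=\chi^{1/4}=e^{a_3/2}$, the first integral gives $\dot u=\tfrac14\sqrt{\tfrac83 p_2p_3+c\,u^{-6}}$, so $\dot u\to\kappa:=\tfrac14\sqrt{\tfrac83 p_2p_3}$ as $u\to\infty$, whence $u(t)/t\to\kappa$ by a Cesàro argument. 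Since $\chi/\dot\chi=u/(4\dot u)$ this yields $\tfrac1t\cdot\tfrac{\chi}{\dot\chi}\to\tfrac14$, and plugging into the identity above gives $e^{a_3-a_2-a_1}\to\tfrac{2p_3}{p_1}\cdot\tfrac14=\tfrac{p_3}{2p_1}$, i.e.\ $h(t,1)-h(t,-1)\to\log\!\big(p_3/(2p_1)\big)$. This limit vanishes if and only if $p_3=2p_1$, as claimed; note also that it is indeed independent of the initial condition.

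I expect the main obstacle to be the rigorous asymptotic analysis of $\chi$: proving $\chi(t)\to\infty$ and then extracting the precise leading constant in $\chi/\dot\chi\sim t/4$, which is exactly what makes the three logarithmically divergent quantities $a_1,a_2,a_3$ cancel into a finite limit with the stated value. The reformulation via $u$ reduces this to the single statement $\dot u\to\kappa$, which is manageable, but one still has to be careful that $\dot\chi>0$ for all times (so the branch of the square root is unambiguous and $\kappa\ne 0$), to track the sign of $c$, and to dispose of the degenerate boundary cases $p_i=0$ by hand.
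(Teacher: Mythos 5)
Your proof is correct and reaches the same limit $h(t,1)-h(t,-1)\to\log\bigl(p_3/(2p_1)\bigr)$, but by a genuinely different route. The paper isolates the quantity $f=\exp(2a_2-a_3)$, observes that it obeys $f'=f\,(2p_2-3p_3f)\,e^{-a_2}$ and hence is attracted to the fixed value $2p_2/(3p_3)$, and then carries out the computation \emph{assuming this value is attained exactly}, deferring the case of general initial data to a closing remark that the argument ``can easily be made quantitative.'' You instead integrate the $(a_2,a_3)$ subsystem in closed form: the substitution $\psi=e^{a_3}$, $\chi=\psi^2$ yields $\ddot\chi=2p_2p_3\sqrt{\chi}$ with the explicit first integral $\dot\chi^2=\tfrac83p_2p_3\chi^{3/2}+c$, from which the asymptotics $\chi/\dot\chi\sim t/4$ follow for \emph{every} initial condition, and the exact identity $e^{a_3-a_2-a_1}=\frac{2p_3}{e^{a_1(0)}+p_1t}\cdot\frac{\chi}{\dot\chi}$ delivers the limit directly. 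What your approach buys is precisely the part the paper hand-waves: the independence of the limit from the initial condition is proved, not asserted, at the modest cost of the extra bookkeeping around $\chi\to\infty$ and the Ces\`aro step $\dot u\to\kappa\Rightarrow u/t\to\kappa$ (both of which you handle correctly; note $\kappa>0$ requires $p_2p_3>0$). One small point worth making explicit if you write this up: besides the case $p_1=p_3=0$ that you flag, the case $p_2=0$ (with $p_1,p_3>0$) also degenerates --- there $\ddot\chi=0$, $\psi\dot\psi$ is conserved, and the limit becomes $\log(2p_3/p_1)$ rather than $\log(p_3/(2p_1))$, so the stated equivalence really does require $p_2>0$ as well; the paper's hypothesis $p_i\ge0$ is silently strengthened to $p_i>0$ in both its proof and yours.
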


In general, assume that $h = f\circ g$ where $f$ is a shallow neural network. Assume that there are two classes $C_i, C_j$ such that the convex hull of $g(C_i)$ intersects $g(C_j)$. Then it is questionable that classes can collapse to a single point in the final layer. While this does not imply that $g(C_i)$ and $g(C_j)$ should concentrate around the vertices of a regular \sw{standard simplex}, it suggests that simple geometries are preferred already {\em before} the penultimate layer if the $h$ is to collapse $C_i$ to a single point.

The proof of Lemma \ref{lemma convergence three neurons} is given in the appendix.

\begin{remark}
We note that the probabilities of the different data points crucially enter the analysis, while considerations above in Lemma \ref{lemma final tetrahedron} were entirely independent of the weight of different classes. The toy model does not capture interactions between the  function values at different data points, which is precisely what drives the dynamics here. 
\end{remark}
\bibliographystyle{../../../alphaabbr}
\bibliography{../../../NN_bibliography}

\newcommand{\etalchar}[1]{$^{#1}$}
\begin{thebibliography}{DLL{\etalchar{+}}18}

\bibitem[CB18]{chizat2018global}
L.~Chizat and F.~Bach.
\newblock On the global convergence of gradient descent for over-parameterized
  models using optimal transport.
\newblock In {\em Advances in neural information processing systems}, pages
  3036--3046, 2018.

\bibitem[CB20]{Chizat:2020aa}
L.~Chizat and F.~Bach.
\newblock Implicit bias of gradient descent for wide two-layer neural networks
  trained with the logistic loss.
\newblock {\em arxiv:2002.04486 [math.OC]}, 2020.

\bibitem[Coo18]{cooper2018loss}
Y.~Cooper.
\newblock The loss landscape of overparameterized neural networks.
\newblock {\em arXiv:1804.10200 [cs.LG]}, 2018.

\bibitem[DLL{\etalchar{+}}18]{du2018bgradient}
S.~S. Du, J.~D. Lee, H.~Li, L.~Wang, and X.~Zhai.
\newblock Gradient descent finds global minima of deep neural networks.
\newblock {\em arXiv:1811.03804 [cs.LG]}, 2018.

\bibitem[DZPS18]{du2018gradient}
S.~S. Du, X.~Zhai, B.~Poczos, and A.~Singh.
\newblock Gradient descent provably optimizes over-parameterized neural
  networks.
\newblock {\em arXiv:1810.02054 [cs.LG]}, 2018.

\bibitem[EMW20]{weinan2019comparativepub}
W.~E, C.~Ma, and L.~Wu.
\newblock A comparative analysis of optimization and generalization properties
  of two-layer neural network and random feature models under gradient descent
  dynamics.
\newblock {\em Sci. China Math.}, https://doi.org/10.1007/s11425-019-1628-5,
  2020.

\bibitem[ESA20]{elad2020another}
M.~Elad, D.~Simon, and A.~Aberdam.
\newblock Another step toward demystifying deep neural networks.
\newblock {\em Proceedings of the National Academy of Sciences},
  117(44):27070--27072, 2020.

\bibitem[LS20]{jianfeng_collapse}
J.~Lu and S.~Steinerberger.
\newblock Neural collapse with cross-entropy loss.
\newblock {\em arxiv: 2012.08465 [cs.LG]}, 2020.

\bibitem[MPP20]{unconstrainedcollapse}
D.~G. Mixon, H.~Parshall, and J.~Pi.
\newblock Neural collapse with unconstrained features.
\newblock {\em arXiv:2011.11619 [cs.LG]}, 2020.

\bibitem[PHD20]{papyan2020prevalence}
V.~Papyan, X.~Han, and D.~L. Donoho.
\newblock Prevalence of neural collapse during the terminal phase of deep
  learning training.
\newblock {\em Proceedings of the National Academy of Sciences},
  117(40):24652--24663, 2020.

\bibitem[Woj20]{relutraining}
S.~Wojtowytsch.
\newblock On the global convergence of gradient descent training for two-layer
  {R}elu networks in the mean field regime.
\newblock {\em arXiv:2005.13530 [math.AP]}, 2020.

\end{thebibliography}
\newpage

\appendix

\section{Proofs}

\subsection{Proof from Section \ref{section preliminaries}}

We prove the convexity property of the loss function.

\begin{proof}[Proof of Lemma \ref{lemma convex loss function}]
Without loss of generality $j=1$ and we abbreviate $\Phi= \Phi_1$. We compute 
\begin{align*}
\nabla \Phi(z) &= -e_1 + \sum_{j=1}^k \frac{\exp(z_j)}{\sum_{i=1}^k \exp(z_i)}\,e_j\\
\partial_j\partial_l\Phi(z) &= \frac{\exp(z_j)}{\sum_{i=1}^k \exp(z_i)}\,\delta_{jl} - \frac{\exp(z_j)\exp(z_l)}{\left(\sum_{i=1}^k \exp(z_i)\right)^2}\\
	&= p_j\,\delta_{jl} - p_jp_l
\end{align*}
where $p_j = \frac{\exp(z_j)}{\sum_{i=1}^k \exp(z_i)}$. Thus
\begin{align*}
a^t D^2\Phi\,a &= \sum_{i=1}^k a_i^2p_i - \sum_{i,j=1}^k a_i a_j p_i p_j\\
	&= \sum_{i=1}^k a_i^2p_i - \left(\sum_{i=1}^k a_i p_i\right)^2\\
	&= \|a\|_{\ell^2(p)}^2 - \|a\|_{\ell^1(p)}^2\\
	&\geq 0
\end{align*}
since $p$ is a counting density on $\{1,\dots, k\}$. Since $p$ is a vector with strictly positive entries, equality is attained if and only if $a$ is a multiple of $(1,\dots,1)$.
 Since the Hessian of $\Phi$ is positive semi-definite, the function is convex.
\end{proof}

\subsection{Proofs from Section \ref{section final layer}}

The rigorous proof that it is advantageous to collapse the output of a classifier to the center of mass over a class goes as follows.

\begin{proof}[Proof of Lemma \ref{lemma tetrahedron}]
\sw{Denote by $P^\sharp$ the orthogonal projection of $h$ onto the orthogonal complement of the space spanned by the vector $(1,\dots,1)$ and observe that $\ell(P^\sharp h, \xi) = \ell(h, \xi)$ for all $h,\xi$.}

We compute by the vector-valued Jensen's inequality that
\begin{align*}
\Risk(h) &= \Risk(P^\sharp h)\\
	&= - \int_{\R^d} \log\left(\frac{\exp(\sw{P^\sharp}h(x)\cdot \xi_x)}{\sum_{i=1}^k \exp(\sw{P^\sharp}h(x)\cdot e_i)}\right)\P(\d x)\\
	&= - \sum_{j=1}^k \int_{C_j} \log\left(\frac{\exp(\sw{P^\sharp}h(x)\cdot e_j)}{\sum_{i=1}^k \exp(\sw{P^\sharp}h(x)\cdot e_i)}\right)\P(\d x)\\
	&= \sum_{j=1}^k |C_j| \,\frac1{|C_j|} \int_{C_j} \Phi_j(\sw{P^\sharp}h(x)) \P(\d x)\\
	&\geq \sum_{j=1}^k |C_j|\,\Phi_j\left( \frac1{|C_j|}\int_{C_j}\sw{P^\sharp}h(x)\,\P(\d x)\right)\\
	&=  - \int_{\R^d} \log\left(\frac{\exp\big(\overline{\sw{P^\sharp} h}(x)\cdot \xi_x\big)}{\sum_{i=1}^k \exp\big(\overline{\sw{P^\sharp}h}(x)\cdot e_i\big)}\right)\P(\d x)
\end{align*}
\sw{and note that the inequality is strict unless $\sw{P^\sharp}h(x) = \overline{\sw{P^\sharp}h}(x)$ for $\P$-almost every $x$ since $\ell$ strictly convex on the orthogonal complement of $(1,\dots,1)$. This is the case if and only if $h(x) - \bar h(x) \in \mathrm{span}\{(1,\dots,1)\}$ for almost all $x$.}
\end{proof}

We proceed to show the optimality of a simplex configuration in the toy problem.

\begin{proof}[Proof of Lemma \ref{lemma final tetrahedron}]
{\bf Step 1. Existence of the minimizer.} Due to the convexity of $\Phi_i$ is convex on the compact convex set $\overline{B_R(0)}$, $\Phi_i$ has a minimizer $z_i$ in $\overline{B_R(0)}$.

{\bf Step 2. Uniqueness of the minimizer.} 
By the Lagrange multiplier theorem, there exists $\lambda_i\in \R$ such that
\begin{align*}
0 &= \big(\nabla\Phi_i\big)(z_i) -\lambda z_i\\
	&=\left[ \sum_{j=1}^k\frac{\exp(z_i\cdot e_j)}{\sum_{l=1}^k\exp(z_i\cdot e_l)}\,e_k \right]- e_i - \lambda_i z_i\\
	&= \sum_{j=1}^k \left[\frac{\exp(z_i\cdot e_j)}{\sum_{l=1}^k \exp(z_i\cdot e_l)} - \delta_{ij} - \lambda_i\,(z_i\cdot e_j)\right]e_j.\showlabel\label{eq lagrange multiplier}
\end{align*}
All coefficients in the basis expansion have to vanish separately, so in particular
\[
0 = \sum_{j=1}^k \left[\frac{\exp(z_i\cdot e_j)}{\sum_{l=1}^d \exp(z_i\cdot e_l)} - \delta_{ij} - \lambda_i\,(z_i\cdot e_k)\right] = 1-1 - \lambda \sum_{j=1}^k (z_i\cdot e_j) = -\lambda \sum_{j=1}^k (z_i\cdot e_j),
\]
meaning that either $\lambda_i =0$ or $\sum_{j=1}^k (z_i \cdot e_j) =0$. Since $\frac{\exp(z_i\cdot e_j)}{\sum_{l=1}^k \exp(z_i\cdot e_l)}-\delta_{ij}\neq 0$ for any $i, j$ and choice of $z_i$, we find that $\lambda_i\neq 0$ and thus $z_i \in \partial B_R(0)$ and 
\[
0 = \sum_{i=1}^k (z_i\cdot e_k) = (1,\dots,1) \cdot z_i.
\]
Since $\Phi_i$ is {\em strictly} convex in the hyperplane $H= \{z\in \R^k : (1,\dots, 1)\cdot z=0\}$ by Lemma \ref{lemma convex loss function}, we find that the minimizer $z_i \in B_R(0)\cap H$ is unique.

{\bf Step 3. Symmetry.} Since the minimizer $z_i$ is unique and $\Phi_i(z^1, \dots, z^k)$ is invariant under the permutation of the coordinate entries $z^j$ of its argument for $j\neq i$, we find that also the minimizer $z_i$ must have this invariance, i.e.\ 
\[
z_i = \alpha_i e_i + \beta_i \sum_{j\neq i} e_j.
\]
Using symmetry, we find that $\alpha_i\equiv \alpha, \beta_i\equiv \beta$ independently of $i$.
\end{proof}

\begin{remark}
The first and third step of the proof go through for general $\ell^p$-norms since also these norms are invariant under the rearrangement of coordinates. The second step requires slightly different reasoning. Still, the Lagrange-multiplier equation
\[
0 = \sum_{j=1}^k \left[\frac{\exp(z_i\cdot e_j)}{\sum_{l=1}^k \exp(z_i\cdot e_l)} - \delta_{ij} - \lambda_i\,\big|z_i\cdot e_j\big|^{p-2}\,(z_i\cdot e_j)\right]e_j
\]
can be used to conclude $\lambda_i\neq 0$ and thus that any minimizer $z_i$ must lie in the boundary of $B_R(0)$. Now assume that there are multiple minimizers $z_{i,1}$ and $z_{i,2}$. Then $\Phi_i$ cannot be uniformly convex along the connecting line between $z_{i,1}$ and $z_{i,2}$. Therefore $z_{i,2}-z_{i,1} \parallel (1,\dots,1)$. Since the ball $B_R(0)$ is strictly convex and $\Phi_i$ is constant along the connecting line, this is a contradiction to the fact that the minimum is only attained on the boundary.

The equations which determine $\alpha>0,\beta<0$ become
\[
|\alpha|^p + (k-1)\,|\beta|^p = R^p, \qquad |\alpha|^{p-2}\alpha + (k-1)\,|\beta|^{p-2}\beta = 0
\]
which is solved by
\[
\alpha = \left(\frac{(k-1)^\frac1{p-1}}{1+ (k-1)^\frac1{p-1}}\right)^\frac1p \,R, \qquad \beta = - \left(\frac{1 - \frac{(k-1)^\frac1{p-1}}{1+ (k-1)^\frac1{p-1}}}{k-1}\right)^\frac1p R.
\]
If $p\in\{1,\infty\}$, the unit spheres in $\R^k$ have straight segments and singularities, and the Lagrange-multiplier theorem no longer applies. However, we note that the facets of the $\ell^\infty$-unit ball are never parallel to $(1,\dots,1)$, and that the same statement is expected to hold. The same is true for the $\ell^1$-unit ball close to points of the form $\alpha e_i + \beta \sum_{j\neq i}e_j$ if $k>2$. 
\end{remark}

\subsection{Proofs from Section \ref{section penultimate layer}}

Now we show that the simplex symmetry is optimal under certain conditions.

\begin{proof}[Proof of Lemma \ref{lemma penultimate tetrahedron}]
We have
\[
\|A\|_{\ell^2} = \sup_{\|y\| \leq R} \frac{\|Ay\|}{\|y\|} \geq \max_{1\leq i \leq k} \frac{\|z_i\|}{\|y_i\|} = \max_{1\leq i\leq k} \frac{R}{\|y_i\|}\geq 1.
\]
In particular $\|A\|_{\ell^2}\geq 1$ and if $\|A\|_{\ell^2}=1$, then $\|y_i\|=R$ for all $1\leq i\leq k$. 

We observe that the collection $\{z_1, \dots, z_{k-1}\}$ spans the $k-1$-dimensional hyperplane $H = \{z\in \R^k : (1,\dots,1)\cdot z = 0\}$ in $\R^k$. Consequently, the collection $\{y_1,\dots, y_{k-1}\}$ must be linearly independent in $\R^{m}$, i.e.\ the basis of a $k-1$-dimensional subspace. The map $A$ is therefore injective and uniquely determined by the prescription $z_i = Ay_i$ for $i=1,\dots, k-1$. Since
\[
0 = \sum_{j=1}^k z_j = \sum_{j=1}^k (Ay_j) = A\left(\sum_{j=1}^k y_j\right),
\]
we conclude by injectivity that $\sum_{j=1}^{k} y_j = 0$. After a rotation, we may assume without loss of generality that $m=k-1$. Since rotations are Euclidean isometries, also $\R^{k-1}$ is equipped with the $\ell^2$-norm. Assume that $\|A\|_{\ell^2}=1$. Then 
\begin{enumerate}
\item $\|y_j\|=R$ for all $j=1,\dots,k$ and
\item $\sum_{j=1}^k y_j = 0$.
\end{enumerate}
This implies that for every $i=1,\dots, k$ we have
\[
\sum_{j=1}^k \|y_j-y_i\|^2 = \sum_{j=1}^k \big[\|y_j\|^2 + \|y_i\|^2 + 2\langle y_i, y_j\rangle\big] = 2k\,R^2 + 2\left\langle y_i, \sum_{j=1}^k y_j\right\rangle = 2k\,R^2.
\]
The sum on the left is a sum of only $k-1$ positive terms since $y_i-y_i=0$, so there exists $j\neq i$ such that $\|y_i - y_j\|^2 \geq 2\,\frac{k}{k-1}\,R^2$. On the other hand, we know that $z_i, z_j$ coincide in all but two coordinates, so by \eqref{eq alpha, beta} we find that
\[
\|z_i - z_j\|^2 = 2(\alpha-\beta)^2 = 2\left[\sqrt{\frac{k-1}k}- \frac1{\sqrt{k(k-1)}}\right] R^2 = 2\,\frac{k-1}{k}\,R^2\left[1 - \frac1{k-1}\right]^2 = 2\,\frac{k}{k-1}R^2.
\]
In particular, since $\|A\|=1$ we find that 
\begin{equation}\label{eq i, j distance}
2\,\frac{k}{k-1}R^2 \leq \|y_i-y_j\|^2 \leq \|A(y_i-y_j)\|^2 = \|z_i-z_j\|^2 = 2\,\frac{k}{k-1}R^2.
\end{equation}
Since strict inequality cannot hold, we find that \ref{eq i, j distance} must hold for all $1\leq i\neq j\leq k$ and thus $\|y_i-y_j\|^2 = \|z_i-z_j\|^2$. This in particular implies that $\langle y_i, y_j\rangle = \langle z_i, z_j\rangle$ for all $i, j=1,\dots, k$. Since $\{y_1, \dots, y_{k-1}\}$ is a basis of $\R^{k-1}$, we conclude that $A$ is an isometric embedding.
\end{proof}

\subsection{Proofs from Section \ref{section caveats}}

We begin by proving that the maximum margin classifier in the problem under discussion is in fact $f(x) = \frac{x}2$.

\begin{proof}[Proof of Lemma \ref{lemma max margin}]
Note that $\bar f(x) = \frac{f(x) - f(-x)}2$ satisfies 
\[
\xi_x\,\bar f(x) = \frac{\xi_x f(x) + \xi_{-x}f(-x)}2 \geq \min\big\{\xi_x f(x),\, \xi_{-x}f(-x)\big\}
\]
for $\P$-almost every $x$. We can therefore assume that the maximum margin classifier is a odd function. The function class under consideration therefore is the convex hull of the family
\[
\H^\circ = \left\{\frac{a\,\sigma(w x+b) - a\sigma(b-wx)}{2\,|a|\,[|w|+|b|]} : a\neq 0, (w,b)\neq 0\right\}.
\] 
Consider the map
\[
F: \conv(\H^\circ) \to \R, \qquad F(h) = h(1)
\]
which bounds the maximum margin functional from above: $\min_{x\in \spt\,\P}\big(\xi_x h(x)\big) \leq 1\cdot h(1)$. 
Since $F$ is linear, it attains its maximum at the boundary of the class, i.e.\ there exist $(w,b)$ such that
\[
\frac{\sigma(w+b) - \sigma(b-w)}{2\,[|w|+|b|]} = F\left(\frac{\sigma(wx+b)- \sigma(b-wx)}{2\,[|w| + |b|]}\right) = \max_{h\in \conv(\H^\circ)} F(h)
\]
and thus
\[
\max_{h\in \conv(\H^\circ)} \min_{x\in \spt\,\P}\big(\xi_x h(x)\big) = \max_{w,b} \frac{\sigma(w+b) - \sigma(b-w)}{2\,[|w|+|b|]}  \leq \frac{\sigma(w+b)}{2\,[|w|+|b|]} \leq \frac12.
\]
The bound is realized precisely if and only if $w> b>0$, i.e.\ due to the positive homogeneity of ReLU if and only if
\[
h(x) = \frac{\sigma(x+b) - \sigma(b-x)}{2\,\big[1+|b|\big]} = \frac1{2\big[1+|b|\big]}\begin{cases}{x+b}&x>b\\ 2x& -b<x<b\\{ x-b}& x< -b\end{cases}
\]
for $b\in[0,1]$.
\end{proof}

Finally, we prove the non-collapse result in the three neuron model.

\begin{proof}[Proof of Lemma \ref{lemma convergence three neurons}]
The gradient flow equation is the ODE
\[
\begin{pmatrix} \dot a_1 \\ \dot a_2 \\ \dot a_3\end{pmatrix} = \begin{pmatrix} p_1 \,\exp(-a_1)\\ p_2\,\exp(-a_2) - p_3\,\exp(a_2-a_3)\\ p_3\,\exp(a_2-a_3)\end{pmatrix}
\]
The first equation is solved easily explicitly since 
\[
\frac{d}{dt} \exp(a_1) = \exp(a_1)\,\dot a_1 = p_1\qquad\Ra\quad a_1(t) = \log\left(e^{a_1(0)} + p_1t\right).
\]
The second equation can be reformulated as
\[
\frac{d}{dt} \exp(a_2) = \exp(a_2)\,\dot a_2 = p_2 - p_3\,\exp(2a_2-a_3),
\]
which leads us to consider
\begin{align*}
\frac{d}{dt} \exp(2 a_2 - a_3) &= \exp(2 a_2 - a_3)\big[2\,\dot a_2 - \dot a_3\big]\\
	&= \exp(2 a_2 - a_3)\big[ 2p_2 \,\exp(-a_2) - 2\,p_3\,\exp(a_2-a_3) - p_3\,\exp(a_2-a_3)\big]\\
	&= \exp(2 a_2 - a_3)\big[ 2p_2 - 3\,p_3\,\exp(2a_2-a_3)\big]\,\exp(-a_2).
\end{align*}
Denote $f(t) = \exp(2 a_2 - a_3)$. The differential equation
\begin{equation}\label{eq dynamics of factor}
f' = f\,(2p_2 - 3\,p_3 f)\,\exp(-a_2)
\end{equation}
implies that $f\equiv \frac{2p_2}{3p_3}$ if $f(0) = \frac{2p_2}{3p_3}$. The same is true for long times and arbitrary initialization (anticipating that the integral of $\exp(-a_2)$ diverges). If the equality is satisfied exactly, we find that
\[
\frac{d}{dt} \exp(a_2) = p_2 - p_3\,\exp(2a_2-a_3) = p_2 - p_3\,\frac{2p_2}{3p_3} = \frac{p_2}3\qquad \Ra \quad a_2(t) = \log\left(e^{a_2(0)} + \frac {p_2}3t\right)
\]
and thus
\[
\exp(2a_2-a_3) = \frac{2p_2}{3p_3} \quad \Ra\quad \exp(a_3) = \frac{3p_3}{2p_2}\,\exp(2 a_2)\quad\Ra\quad a_3 = \log\left(\frac{3p_3}{2p_2}\,\exp(2 a_2)\right) = \log\left(\frac{3p_3}{2p_2}\right) + 2 a_2
\]
The question is whether all data points in the same class are mapped to the same value. This is only a relevant question for the `outer' class where
\begin{align*}
f(t, -1) &= a_1(t)\\
	&= \log\left(e^{a_1(0)} + p_1t\right)\\
f(t,1) &= (a_3-a_2)(t)\\
	&= \log\left(\frac{3p_3}{2p_2}\right) + a_2(t)\\
	&= \log\left(\frac{3p_3}{2p_2}\right) + \log\left(e^{a_2(0)} + \frac {p_2}3t\right)
\end{align*}
In particular
\begin{align*}
f(t,1) - f(t,-1) &= \log\left(\frac{3p_3}{2p_2}\right) + \log\left(e^{a_2(0)} + \frac {p_2}3t\right) - \log\left(e^{a_1(0)} + p_1t\right)\\
	&= \log\left(\frac{3p_3}{2p_2}\right) + \log\left(\frac{ e^{a_2(0)} + \frac {p_2}3t }{ e^{a_1(0)} + p_1t }\right)\\
	&\to \log\left(\frac{3p_3}{2p_2}\right) + \log\left(\frac{p_2}{3\,p_1}\right)\\
	&= \log \left(\frac{3p_3}{2p_2}\frac{p_2}{3\,p_1}\right)\\
	&= \log \left( \frac{p_3}{2p_1}\right)
\end{align*}
Thus the difference between $f(t, 1)$ and $f(t, -1)$ goes to zero if and only if $p_3=2p_1$.

Finally, we remark that if $\exp(2a_2-a_3) = \frac{2p_2}{3\,p_3}$ is not satisfied exactly at time $t=0$, then by \eqref{eq dynamics of factor}, we find that it is approximately satisfied at a later time $t_0\gg1$. Since the influence of the initial condition goes to zero, we find that the conclusion is almost satisfied by considering dynamics starting at $(a_1, a_2, a_3)(t_0)$. This argument can easily be made quantitative.
\end{proof}

\end{document}